\documentclass{article}
\usepackage{amsmath}
\usepackage[amsthm]{ntheorem}
\usepackage{stmaryrd}
\usepackage{caption}

\usepackage{amssymb}
\usepackage{hyperref}
\usepackage{graphicx}
\usepackage{subcaption}
\usepackage{float}
\usepackage[round]{natbib}
\usepackage{mathabx}

\usepackage{thmtools,thm-restate}

\usepackage{geometry}
\newgeometry{
    textheight=9in,
    textwidth=5.5in,
    top=1in,
    headheight=12pt,
    headsep=25pt,
    footskip=30pt
  }
\date{}

\usepackage{siunitx}
\DeclareMathOperator{\shrink}{ST}

\newcommand{\rr}{J}

\newcommand{\x}{x}
\newcommand{\y}{y}
\newcommand{\X}{{\mathbf X}}
\newcommand{\Y}{{\mathbf y}}
\newcommand{\w}{w}


\newcommand{\ws}{w^\star}
\newcommand{\thetas}{\theta^\star}
\newcommand{\zs}{z^\star}

\makeatletter
\newcommand\nocaption{%
    \renewcommand\p@subfigure{}
    \renewcommand\thesubfigure{\thefigure\alph{subfigure}}
}
\makeatother

\usepackage{xcolor}
\definecolor{linkcolor}{RGB}{83,83,182}
\definecolor{citecolor}{RGB}{128,0,128}
\hypersetup{
    colorlinks=true,
    citecolor=citecolor,
    linkcolor=linkcolor,
    urlcolor=linkcolor}

\usepackage{MacroBook}
\graphicspath{{images/}}

\title{Iterative regularization for convex regularizers\\
}

\author{
\normalsize
Cesare Molinari${}^1$ \quad \quad Mathurin Massias${}^2$ \quad \quad Lorenzo Rosasco${}^{124}$ \quad \quad Silvia Villa${}^3$ \\[2mm]
\normalsize ${}^1$Istituto Italiano di Tecnologia  \\
\normalsize${}^2$MaLGa, DIBRIS, Universit\`a degli Studi di Genova \\
\normalsize${}^3$MaLGa, DIMA, Universit\`a degli Studi di Genova \\
\normalsize ${}^4$Center for Brains, Minds and Machines, Massachussets Institute of Technology
}
\usepackage{setspace}

\begin{document}
\maketitle

\begin{abstract}
  We study iterative regularization for linear models, when the bias is convex but not necessarily strongly convex.
  We characterize the stability properties of a primal-dual gradient based approach, analyzing its convergence in the presence of worst case deterministic noise.
  As a main example, we specialize and illustrate the results for the problem of robust sparse recovery.
  Key to our analysis is a combination of ideas from regularization theory and optimization in the presence of errors.
  Theoretical results are complemented by experiments showing that state-of-the-art performances can be achieved with considerable computational speed-ups.
\end{abstract}

\section{Introduction}
Machine learning often reduces to estimating some model parameters.  This approach raises at least two orders of questions:
first, multiple solutions may exist, amongst which a specific one must be selected;
second, potential instabilities with respect to noise and sampling must be controlled.\\
A classical way to achieve both goals is to consider explicitly penalized or constrained objective functions.
In machine learning, this leads to regularized empirical risk minimization \citep{Shalev-Shwartz_Ben-David14}.
A more recent approach is based on directly exploiting an iterative optimization procedure for an unconstrained/unpenalized problem.

This approach is shared by several related ideas.
One is implicit regularization \citep{mahoney2012approximate,gunasekar2017implicit}, stemming from the observation that the bias is controlled increasing the number of iterations, just like in penalized methods it is controlled decreasing the penalty parameter.
Another one is early stopping \citep{yao2007early,raskutti}, putting emphasis on the fact that running the iterates to convergence might lead to instabilities in the presence of noise.
Yet another, and more classical, idea is iterative regularization, where both aspects (convergence and stability) are considered to be relevant \citep{engl1996regularization,kaltenbacher2008iterative}.
This approach naturally blends modeling and numerical aspects, often improving computational efficiency, while retaining good prediction accuracy \citep{yao2007early}.
Another reason of interest is that iterative regularization may be one of the mechanisms explaining generalization in deep learning \citep{neyshabur2017geometry,gunasekar2017implicit,arora2019implicit,vavskevivcius2020statistical}.

A classic illustrative example is gradient descent for linear least squares.
The latter, if suitably initialized, converges (is biased) to the minimal Euclidean norm solution.
Moreover, its stability is controlled along the iterative process, allowing to derive early stopping criterions depending on the noise \citep{engl1996regularization,raskutti}.
There are a number of developments of these basic results.
For example, one line of work has considered extensions to other gradient-based methods, such as stochastic and accelerated gradient descent \citep{zhang2005boosting,moulines2011non,rosasco2015learning,PagRos19}.
Another line of work has considered classification problems \citep{gunasekar2017implicit,soudry2018implicit} and also
nonlinear models, such as deep networks \citep{neyshabur2017geometry}, see also \cite{kaltenbacher2008iterative} for results for non linear inverse problems.

In this work, we are interested in iterative regularization procedures where the considered bias is not the Euclidean norm but rather a general convex functional.
The question is to determine whether or not there exists an iteration analogous to gradient descent for such general bias.
This question has been studied when the bias is \emph{strongly} convex.
In this case, linearized Bregman iterations (a.k.a. mirror descent) can be used \citep{burger2007error,gunasekar2018characterizing}.
For this approach accelerated algorithms \citep{matet2017don} have also been considered and studied.
Finally, approaches have also been studied based on  diagonal methods \citep{garrigos2018iterative} and their acceleration \citep{calatroni2019accelerated}.
The general convex case, even for linear models, is much less understood.
There have been studies for ADMM/Bregman iteration, but the procedure requires solving a nontrivial optimization problem at each iteration \citep{burger2007error}.
Further, stability and convergence results are proved only in terms of Bregman divergence, which in general is a weak result.
Interestingly, various recent results study  iterative regularization for sparse recovery, where the bias is defined by an $\ell_1$ norm \citep{agarwal2012stochastic,osher2016sparse,Vaskevicius_Kanade_Rebeschini19}.

In this paper we propose and study an efficient algorithm for general convex bias beyond $\ell_1$ norm.
Indeed, we adapt the Chambolle and Pock (CP) algorithm, popular in imaging \citep{Chambolle_Pock11}, and study its {iterative} regularization properties. The CP algorithm  is a first order primal-dual method, thus easy to implement and requiring only matrix-vector multiplications and proximity operators.
In the setting of  linear models with worst case errors, our analysis provides dimension free convergence and stability results  in terms of  both Bregman divergence and approximate feasibility. A combination of these two results allows to derive strong convergence results in the $\ell_1$ norm case.
The proof relies on results from the analysis of primal-dual methods with errors  \citep{rasch2020inexact}.
From our general results, several special cases can be derived and we discuss as an example sparse recovery, proving dimension free estimates in norm.
In the experimental section, we investigate the proposed method and show state-of-the-art performances with significant computation savings compared to the Tikhonov approach.

\noindent{\bf Notation,}
The set of integers from 1 to $n$ is $[n]$.
Let $f\colon\mathbb{R}^n\to \mathbb{R}\cup\{+\infty\}$ and $J\colon\mathbb{R}^p\to \mathbb{R}\cup\{+\infty\}$ be proper, convex, and lower semicontinuous.
The subdifferential of $J$ at $\w\in\mathbb{R}^p$ is $\partial J(\w)$.
The Bregman divergence associated to $J$ is denoted $D^{\theta}_J(\w, \w') \eqdef J(\w) - J(\w') - \langle \theta, \w - \w' \rangle$, where $\theta \in \partial J(\w')$.
The Fenchel-Legendre conjugate of $f$ is $f^\star(\theta) \eqdef \sup_w \langle w, \theta \rangle - f(w)$.
The indicator function $\iota_{\{\Y\}}$ is equal to zero if the argument equals $\Y$ and $+\infty$ otherwise.

\section{Over-parametrization, implicit and explicit regularization}

The basic problem of supervised learning is to  find a relationship to predict outputs $y$ from inputs $x$,
\begin{equation*}
    x \mapsto f(x) \approx y \enspace,
\end{equation*}
given a limited number of  pairs $(\x_i, \y_i)_{i=1}^n$ with, e.g. $\x_i\in \R^d$ and $\y_i\in \R$.
The search for a solution is typically restricted to  a set of parametrized functions $f_\w$,  with $w\in \R^p$.
A prototype example are linear models where $p=d$ and $f_\w(x) = \langle w, x \rangle$, or more generally $f_\w(x) = \sum_{j=1}^p \w^j  \phi_j(x)$, for some dictionary $\phi_j:\R^d\to \R,~~j=1, \dots, p$ \citep{Hastie_Tibshirani_Friedman09,Shalev-Shwartz_Ben-David14}.
In modern applications, it is often the case that the number of parameters $p$ is vastly larger than the number of available data points $n$, a regime called over-parametrized.
Excluding degenerate cases, one can then expect to find a solution $\w$ capable of interpolating the data, that is satisfying,
\begin{equation}\label{eq:interpolation}
    f_\w(x_i) = y_i, \quad \quad \forall i \in [n] \enspace.
\end{equation}
In the sequel we consider the case of a linear $f_w(x) = \langle w, x \rangle$.
A popular method to find a solution to \eqref{eq:interpolation} is gradient descent on least squares, also called Landweber iteration:
\begin{equation}\label{eq:gd_iterates}
    \w_{k}= w_{k-1} - \gamma \X^\top(\X w_{k-1} - \Y) \enspace,
\end{equation}
where $\X$ and $\Y$ are the data matrix and the outputs vector, respectively (see Section~\ref{Sec:3} for more details).
It is well known \citep{engl1996regularization} that, if initialized at $w_0 = 0$, the iterations of gradient descent converge to a specific solution, namely
\begin{problem}\label{pb:min_norm}
   \argmin_{w\in \R^p} \nor{w}  \quad \text{s.t.} \quad \X w = \Y \enspace.
\end{problem}
This means that amongst all  solutions, the algorithm  is  implicitly \emph{biased} towards that with small norm.
The bias is implicit in the sense that there is no explicit  penalization or constraint in the iterations \eqref{eq:gd_iterates}.
This approach can be contrasted to  explicit penalization  (Tikhonov regularization),
\begin{problem}\label{pb:tikhonov}
    w^{(\lambda)} = \argmin_{w\in \R^p} \lambda \nor{w}^2 + \nor{\Y - \X w}^2 \enspace,
\end{problem}
where the minimal norm solution \eqref{pb:min_norm} is  obtained for $\lambda$ going to zero.
It is well known that for Tikhonov regularization larger values of $\lambda$ improve stability.
Interestingly, the same effect can also be achieved with  gradient descent~\eqref{eq:gd_iterates},
by {\em not} running the iterations until convergence, a technique often referred to as early stopping \citep{engl1996regularization,yao2007early}.
In this view the number of iterations $k$ plays the role of a regularization parameter just like $\la$ in Tikhonov regularization (or rather $1/\lambda$).
Iterative regularization is particularly appealing in the large scale setting, where substantial computational savings are expected:
early stopping needs a finite number of iterations \eqref{eq:gd_iterates}, while  Tikhonov regularization requires solving
exactly \Cref{pb:tikhonov} for multiple values of $\lambda$.

It is natural  to ask whether the above iterative regularization scheme applies to biases beyond the Euclidean norm.
For a strongly convex $J$, an answer is given by considering the mirror descent algorithm \citep{nemirovsky1983problem,beck2003mirror} with respect to the Bregman divergence induced by $J$.
The bias $J$ is not used to define an explicit penalization of an empirical risk, but it appears in the mirror descent algorithm, and in this sense is ''less  implicit''.
The results in \citet{benning2016gradient} and \citet{gunasekar2018characterizing} show that mirror descent is implicitly biased towards the solution of the following problem
\begin{problem}\label{pb:generic_J}
    \argmin_{w\in \R^p} J(w)  \quad \text{s.t.} \quad \X w = \Y \enspace,
\end{problem}
and exhibit similar regularization and stability properties to the one of the gradient descent algorithm.
In both \citet{benning2016gradient} and \citet{gunasekar2018characterizing}, the key technical assumption is {\em strong convexity} of $J$ leaving open the question of dealing with biases that are only convex.
In this paper, we take steps to fill in this gap studying an efficient approach for which we characterize the iterative regularization properties.

\section{Problem setting and proposed algorithm}\label{Sec:3}

We begin describing the algorithm we consider and  its derivation. We first set some notation.
In the following, $\X$ is an $n$ by $p$ matrix and $\Y$ an $n$-dimensional vector.
Throughout, we assume that $n \leq p$ and that the linear equation has at least one solution for the exact data $\Y$.
For instance, if $\X$ has rank $n$, \Cref{pb:cvxbias} is feasible for every $\Y$.
In particular, a solution exists also for the noisy data $\Y^\delta$.
This is not the case in our general setting, where the solution to the noisy problem $\X w= \Y^\delta$ may not exist.

Note that, we use a vectorial notation for simplicity
but our results are dimension free and sharp for an infinite dimensional setting where $\X$ is a linear bounded
operator between separable Hilbert spaces.  In the following,  we also consider the case where
$\Y$  is  unknown, and a vector $\Y^\delta$ is available such that $\nor{\Y - \Y^\delta} \le \delta$,
where $\delta \ge 0$ can be interpreted as the noise level. We will assume the bias of interest to be specified by
a functional $J:\R^p\to \R\cup\{+\infty\}$ which is proper, convex and lower semicontinuous.

\subsection{Proposed algorithm}
Consider the following iterations, with initialization $\w_0 \in \R^p, \theta_0 = \theta_{-1} \in \R^n$, and parameters $\tau$, $\sigma$ such that $\sigma \tau \opnor{\X}^2 < 1$:
\begin{equation}\label{eq:cp}
    \begin{cases}
        \w_{k + 1} = \prox_{\tau \rr} (\w_k - \tau \X^\top (2\theta_k - \theta_{k - 1}))  \enspace,\\
        \theta_{k + 1} = \theta_k + \sigma (\X \w_{k + 1} - \Y) \enspace.
    \end{cases}
\end{equation}
If $\theta_0 = 0$, since $\theta_k = \sigma \sum_{1}^k (Xw_i - y)$, this algorithm can be rewritten without $\theta_k$:
\begin{equation*}
    w_{k + 1} = \prox_{\tau J} \Big(\w_k - \tau \sigma \X^\top \big( \textstyle\sum\nolimits_{1}^k (Xw_i - y) + Xw_k - y \big)\Big).
\end{equation*}
In terms of computations the algorithm \eqref{eq:cp} is very similar to the forward-backward/proximal gradient algorithm \citep{Combettes_Wajs2005}.
The difference is that the gradient term is here replaced by the sum of past gradients.
We instantiate algorithm \eqref{eq:cp} for two popular choices of $J$.
For $J = \nor{\cdot}^2$, the updates read:
\begin{equation*}
    \begin{cases}
        \w_{k + 1} = \frac{1}{1 + \tau}(\w_k - \tau \X^\top (2\theta_k - \theta_{k - 1})) \enspace, \\
        \theta_{k + 1} = \theta_k + \sigma (\X \w_{k + 1} - \Y) \enspace.
\end{cases}
\end{equation*}
Notice that, though involving very similar computations, the algorithm does not reduce to gradient descent iterations \eqref{eq:gd_iterates}.

For $J = \nor{\cdot}_1$, denoting by $\shrink (\cdot, \tau)$ the soft-thresholding operator of parameter $\tau$, the iterations \eqref{eq:cp} read:
\begin{equation*}
\begin{cases}
    \w_{k + 1} = \shrink(\w_k - \tau \X^\top (2\theta_k - \theta_{k - 1}), \tau) \enspace, \\
    \theta_{k + 1} = \theta_k + \sigma (\X \w_{k + 1} - \Y)   \enspace.
\end{cases}
\end{equation*}
Also in this case, it is similar -- yet not equivalent -- to a popular algorithm to solve the Tikhonov problem: the Iterative Soft-Thresholding Algorithm \citep{daubechies2004iterative}.

\begin{proposition}\label{prop:cp}
The iterations \eqref{eq:cp} converge to a point $(\ws, \thetas)$ such that $\X\ws = \Y$.
Additionally, $\ws$ is a minimizer of $J$ amongst all interpolating solutions, meaning that it solves
\begin{problem}\label{pb:cvxbias}
    \argmin_{w\in \R^p} J(w) \quad \text{s.t.} \quad \X w = \Y \enspace.
\end{problem}
\end{proposition}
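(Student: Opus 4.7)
The plan is to recognize iteration \eqref{eq:cp} as the Chambolle--Pock primal-dual algorithm applied to an equivalent reformulation of Problem~\ref{pb:cvxbias}, and then invoke its classical convergence guarantee. The first step is to rewrite Problem~\ref{pb:cvxbias} in unconstrained composite form as
\begin{equation*}
    \min_{w \in \R^p} J(w) + \iota_{\{\Y\}}(\X w),
\end{equation*}
since $\iota_{\{\Y\}}(\X w) = 0$ when $\X w = \Y$ and $+\infty$ otherwise. Taking the Fenchel conjugate of the indicator yields $\iota_{\{\Y\}}^\star(\theta) = \langle \Y, \theta\rangle$, so the associated saddle-point problem is
\begin{equation*}
    \min_{w} \max_{\theta}\; J(w) + \langle \X w, \theta\rangle - \langle \Y,\theta\rangle.
\end{equation*}
Feasibility of $\X w = \Y$ combined with the simplicity of the indicator yields strong duality, so a saddle point $(\ws,\thetas)$ exists.

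Next I would verify that the updates in \eqref{eq:cp} are exactly the Chambolle--Pock iterates (in the variant where the extrapolation is carried on the dual variable) applied to this saddle. The proximal step in $\theta$ for the linear function $\langle \Y,\cdot\rangle$ is the translation $\theta \mapsto \theta - \sigma \Y$, so when combined with the gradient-like term $\sigma \X w_{k+1}$ it produces precisely $\theta_{k+1} = \theta_k + \sigma(\X w_{k+1} - \Y)$. The proximal step in $w$ applied to $J$ with the extrapolated dual $2\theta_k - \theta_{k-1}$ reproduces the first line of \eqref{eq:cp}. Hence \eqref{eq:cp} is nothing but Chambolle--Pock for this problem.

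With the identification in hand, the convergence theorem of \citet{Chambolle_Pock11} — which requires precisely $\sigma\tau\opnor{\X}^2 < 1$ together with the existence of a saddle — gives convergence of $(w_k,\theta_k)$ to some saddle point $(\ws,\thetas)$. From the saddle-point inequality, $\ws$ is a minimizer of the primal objective $w \mapsto J(w) + \iota_{\{\Y\}}(\X w)$, which means that $\X \ws = \Y$ (otherwise the primal value would be infinite, contradicting feasibility) and that $\ws$ has minimal $J$ value amongst all interpolating solutions. This is exactly Problem~\ref{pb:cvxbias}.

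The main subtlety I expect is the careful identification of \eqref{eq:cp} with one of the standard Chambolle--Pock variants, since the extrapolation here acts on $\theta$ rather than on $w$: one has to order the updates correctly and keep track of the indices, and check that the standing assumption $\sigma\tau\opnor{\X}^2<1$ matches the stepsize condition in the original convergence theorem. Once this bookkeeping is done, the rest of the argument is essentially a direct citation of the known primal-dual convergence result plus the translation of the saddle-point property back to the primal constrained problem.
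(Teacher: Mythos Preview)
Your proposal is correct and follows essentially the same approach as the paper: rewrite Problem~\ref{pb:cvxbias} as $\min_w J(w)+\iota_{\{\Y\}}(\X w)$, identify \eqref{eq:cp} as an instance of the Chambolle--Pock algorithm, and invoke the convergence theorem of \citet{Chambolle_Pock11}. The only cosmetic difference is that the paper makes the bookkeeping you anticipate explicit by casting \eqref{eq:cp} as CP applied to the \emph{dual} problem (with $f=\langle \Y,\cdot\rangle$, $g=J^\star$, $K=-\X^\top$), which is exactly your ``variant with extrapolation on the dual variable''; also, where you hand-wave strong duality from feasibility, the paper instead assumes a qualification condition $-\X^\top\thetas\in\partial J(\ws)$ to guarantee existence of a saddle point.
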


We illustrate some examples of the above setting.



\bex[Sparse recovery]
Choosing $\rr = \nor{\cdot}_1$ corresponds to finding the minimal $\ell_1$-norm solution to a linear system, and in this case \Cref{pb:cvxbias} is known as Basis Pursuit \citep{Chen_Donoho_Saunders98}.
The relaxed approach of \Cref{pb:tikhonov} in this case yields the Lasso \citep{Tibshirani96}.
$\ell_1$-based approaches have had a tremendous impact in imaging, signal processing and machine learning in the last decades \citep{Hastie_Tibshirani_Wainwright15}.
\eex
\bex[Low rank matrix completion]\label{ex:lowrank}
In several  applications, such as recommendation systems, it is useful to recover a low rank matrix, starting from the observation of a subset of its entries \citep{Candes_Recht09}. A convex formulation is:
\begin{problem}\label{pb:low_rank}
    \min_{W \in \bbR^{p_1 \times p_2}} \nor{W}_* \quad \text{s.t. }  W_{ij} = Y_{ij}  \quad   \forall (i, j) \in \cD \enspace,
\end{problem}
where $\nor{\cdot}_*$ is the nuclear norm and $\cD \subset [p_1] \times [p_2]$ is the set of observed entries of the matrix $Y$.
In that case, $\X$ is a  self-adjoint linear operator from $\bbR^{p_1 \times p_2}$ to $\bbR^{p_1 \times p_2}$, such that $(X W)_{ij}$ has value $W_{ij}$ if $(i, j) \in \cD$ and 0 otherwise;
 the constraints write $\X W = \X Y$.
\eex

\bex[Total variation] In many imaging tasks such as deblurring and denoising, regularization through total variation allows to simultaneously  preserve edges whilst removing noise in flat regions \cite{Rudin_Osher_Fatemi92}.
The problem of Total Variation is
$\min_{W \in \R^{p_1 \times p_2}} \nor{\nabla W}_1 \text{s.t. } \X W = Y$,
and can be reformulated as:
    $\min_{\tilde W} \ \Omega(\tilde W)$ s.t. $\tilde\X \tilde W = \tilde Y$,
with $\tilde W = \begin{pmatrix} W \\ U \end{pmatrix},\  \Omega(\tilde W) = \Vert U \Vert_1, \ \tilde \X = \begin{pmatrix} \X  &0 \\ \nabla &-\mathrm{Id} \end{pmatrix}$ and $\tilde Y = \begin{pmatrix} Y \\ 0 \end{pmatrix}$.
\eex

\subsection{Chambolle-Pock algorithm}

In this section we prove \Cref{prop:cp} by casting \eqref{eq:cp} as an instance of the Chambolle-Pock algorithm \citep{Chambolle_Pock11} which solves:
$$\min_w f(\X w) + g (w) \enspace.$$
Hence, for $f = \iota_{\{\Y\}}$ and $g = J$, it can minimize a convex function on a set defined by linear equalities, as in \Cref{pb:cvxbias}.
For this choice of $f$ and $g$, it instantiates as \eqref{eq:cp} (see \Cref{app:cp}).

Amongst other assets, algorithm \eqref{eq:cp} only involves matrix-vector multiplications, and the computation of $\prox_J$, available in closed-form in many cases.
The only tunable parameters are two step-sizes, $\tau$ and $\sigma$, which are easy to set.
As usual for this class of methods, called primal-dual, the Lagrangian is a useful tool to establish convergence results.
The Lagrangian of \Cref{pb:cvxbias} is
\begin{equation}\label{eq:lagrangian}
    \cL(w, \theta) = J(w) + \langle \theta, \X w - \y \rangle \enspace,
\end{equation}
where $\theta \in \R^n$ is the dual variable.
Under a technical condition (\Cref{app:duality}), $\ws$ is a solution of \Cref{pb:cvxbias} if and only if there exists a dual variable $\thetas$ such that $\left(\ws,\thetas\right)$ is a saddle-point for the Lagrangian, namely, iff for every $\left(\w, \theta\right) \in \R^p \times \R^n$,
\begin{equation}\label{optcond2}
\cL(\ws, \theta)\leq\cL(\ws, \thetas)\leq \cL(\w, \thetas) \enspace.
\end{equation}
The variable $\theta$ is in our setting just an auxiliary variable, and we will be interested in convergence properties of $w_k$ towards $\ws$.

{\bf  Other algorithms}  As mentioned in the introduction,  other algorithms could be considered, e.g. ADMM/Bregman iteration.
However, we are not aware of  methods that can be efficiently implemented in our general setting. In \Cref{sec:app_related_works}.
we provide an extensive review discussing the connection with a number of different approaches and related works.

\section{Theoretical analysis}
\label{sec:theoretical}

In this section, we analyze the convergence properties of Algorithm \eqref{eq:cp}.
First, we need to choose a suitable criterion to estimate the approximation properties of the iterates.
In general, it is not reasonable to expect a rate of convergence for the distance between the iterates and the
solution.
Indeed, since the problem is only convex, it is well known that the convergence in distance can be arbitrarily slow.
In \Cref{metricsec}, we explain why a reasonable choice is given by the duality gap together with the residual norm (respectively, $\cL(\w_k, \thetas)-\cL(\ws, \theta_k)$ and $\nor{\X\w_k-\Y}$).
For these two quantities, we derive:
\vspace*{-2mm}
\begin{itemize}{\topsep=0pt}
    \item
    convergence rates in the exact case, i.e. when the data $\Y$ is available (\Cref{rate});
    \item
     early-stopping bounds in the inexact case, i.e. when the accessible data is only $\Y^\delta$ with $\nor{\Y^{\delta}-\Y}\leq \delta$ (\Cref{early} and \Cref{earlstop}).
\end{itemize}

In \Cref{subsec:l1}, we apply our analysis to the specific choice of $J$ equal to the $\ell_1$-norm.
In this particular case, combining the previous results, we even obtain bounds directly on $\nor{\w_k - \ws}$.

\subsection{Measure of optimality}\label{metricsec}
To discuss which criterion is significant to study the algorithm convergence, we recall from \eqref{optcond2} that, if
\begin{equation}\label{eq:saddle}
\cL(\w', \theta)-\cL(w, \theta')\leq 0
\end{equation}
for every $\left(\w,\theta\right)\in\R^p\times\R^n$, then $\left(w',\theta'\right)$ is a primal-dual solution.
In general, it is difficult to prove that \Cref{eq:saddle} holds for every $\left(\w,\theta\right)\in\R^p\times\R^n$.
Then, given a saddle-point $(\ws, \thetas)$ and a generic $(\w', \theta')\in\R^p\times\R^n$, it is popular to consider the  quantity
\begin{equation}\label{eq:pdgap}
\cL(\w', \thetas) - \cL(\ws, \theta') \geq 0 \enspace.
\end{equation}
To establish the optimality of $\left(w',\theta'\right)$, it is not enough to ensure
$\cL(\w', \thetas)-\cL(\ws, \theta') = 0.$
\Cref{lem}, proved in \Cref{sub:app_lemmas}, shows that this condition, when coupled with $\X\w'=\Y$, implies that $\w'$ is a solution of \cref{pb:cvxbias}.
%
\begin{restatable}{lemma}{optimalitylemma}\label{lem}
	Let $(\ws, \thetas)$ be a primal-dual solution and $(\w', \theta')$ a point in $\R^p\times\R^n$ such that
	$\cL(\w', \thetas)-\cL(\ws, \theta')=0$ and $\X\w' = \Y$.
	Then $(\w', \thetas)$ is a primal-dual solution.
\end{restatable}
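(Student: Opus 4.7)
The plan is to directly verify the two saddle-point inequalities that define $(\w',\thetas)$ being a primal-dual solution, namely
\[
\cL(\w', \theta)\leq \cL(\w', \thetas)\leq \cL(\w, \thetas)
\]
for every $(\w,\theta)\in\R^p\times\R^n$.

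First I would dispatch the dual inequality. Since $\cL(\w',\theta)-\cL(\w',\thetas) = \langle \theta-\thetas,\,\X\w'-\Y\rangle$ and the hypothesis gives $\X\w' = \Y$, this inner product vanishes for every $\theta$, so the left inequality holds with equality.

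For the primal inequality I would exploit the key simplification offered by feasibility. Because $(\ws,\thetas)$ is a primal-dual solution, $\ws$ is feasible, so $\X\ws=\Y$; combined with $\X\w'=\Y$, the Lagrangian reduces on the relevant points to the bias alone:
\[
\cL(\ws,\theta') = J(\ws), \qquad \cL(\ws,\thetas) = J(\ws), \qquad \cL(\w',\thetas) = J(\w').
\]
The hypothesis $\cL(\w',\thetas)-\cL(\ws,\theta')=0$ therefore collapses to $J(\w') = J(\ws)$.

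Finally, I would invoke the right saddle-point inequality for $(\ws,\thetas)$, which reads $\cL(\ws,\thetas)\leq \cL(\w,\thetas)$ for all $\w$, i.e. $J(\ws)\leq \cL(\w,\thetas)$. Substituting $J(\w') = J(\ws) = \cL(\w',\thetas)$ into this bound yields $\cL(\w',\thetas)\leq \cL(\w,\thetas)$ for every $\w$, completing the proof. There is no genuine obstacle here: the statement is essentially a bookkeeping lemma, and the only point to be careful about is to use feasibility of \emph{both} $\ws$ and $\w'$ to make the $\theta$-dependent terms in the Lagrangian vanish simultaneously.
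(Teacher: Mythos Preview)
Your proof is correct, but it takes a different route from the paper's. The paper first rewrites the gap as a Bregman divergence, $\cL(\w',\thetas)-\cL(\ws,\theta') = D_J^{-\X^\top\thetas}(\w',\ws)$, and then shows that $D_J^{\bar v}(\w',\ws)=0$ with $\bar v\in\partial J(\ws)$ forces $\bar v\in\partial J(\w')$; applying this with $\bar v=-\X^\top\thetas$ recovers the first-order optimality condition $-\X^\top\thetas\in\partial J(\w')$, which together with $\X\w'=\Y$ is exactly \eqref{optcond1}. Your argument instead collapses everything to $J(\w')=J(\ws)$ and transfers the primal saddle-point inequality from $\ws$ to $\w'$ directly. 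Your approach is shorter and entirely elementary; the paper's approach has the side benefit of establishing the Bregman identity \eqref{breggap}, which is reused later (e.g.\ in \Cref{rem:control_stronglycvx} and \Cref{subsec:l1}), and of exhibiting the subgradient $-\X^\top\thetas\in\partial J(\w')$ explicitly rather than only the saddle-point inequalities.
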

Thus, the  quantities $\cL(\w_k, \thetas)-\cL(\ws, \theta_k)$ and $\nor{\X\w_k-\Y}$, studied together, are a reasonable measure of optimality for  the iterate $\w_k$.

Note that $\cL(\w_k, \thetas)-\cL(\ws, \theta_k)$ is the error measure used in a series of papers dealing with regularization of inverse problems with general convex regularizers, see e.g. \cite{burger2007error}.
It is well known that if $J$ is strongly convex then this quantity controls the distance in norm (\Cref{rem:control_stronglycvx}) and therefore is a proper measure of convergence.
If $J$ is only convex, this measure of error can be quite weak.
In \Cref{subsec:l1} we point out the limitations of this quantity when dealing with  $J= \nor{\cdot}_1$.
For this choice of $J$, $\cL(0, \thetas)-\cL(\ws, \theta_k)$ is 0 for any $\theta_k$; as shown on \Cref{fig:bregman_sparse}, this quantity is also 0 when $w_k$ and $\ws$ have the same support and sign.
\begin{figure*}
	\centering
	\includegraphics[width=\linewidth]{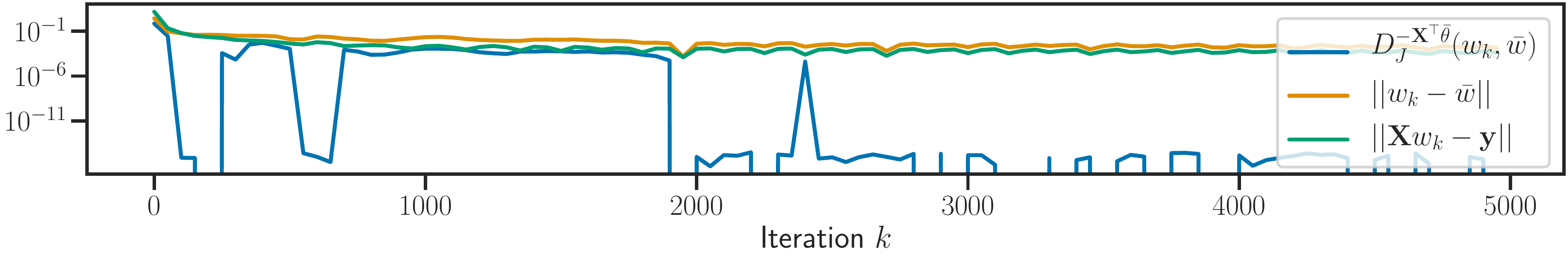}
	\captionof{figure}{For $J = \nor{\cdot}_1$, convergence of primal-dual iterates $\w_k$ towards $\ws$, measured in norm, feasability and Bregman divergence.
		The Bregman divergence quickly vanishes up to numerical errors (the iterates have the same sign as the solution); yet the iterates are still far from the solution.}
	\label{fig:bregman_sparse}
\end{figure*}%
\subsection{Exact case}\label{exactsec}
First consider the iterates $(\w_k, \theta_k)$ obtained by applying iterations \eqref{eq:cp} to the exact problem, namely where the data $\Y$ is available. Let $(\ws, \thetas)$ be a saddle-point for the Lagrangian.
Denoting the primal-dual variables by $z = (\w, \theta)$, we have $z_k=\left(\w_k,\theta_k\right)$ for the iterates of the algorithm and $\zs=\left(\ws,\thetas\right)$ for the saddle-point.
For $\tau$ and $\sigma>0$, define $V$ as the following square weighted norm on $\R^p\times \R^n$:
\begin{equation}\label{VVV}
V(z) \eqdef \frac{1}{2\tau} \nor{\w}^2 + \frac{1}{2\sigma} \nor{\theta}^2. \enspace
\end{equation}
For the averaged iterates
$    {\overline\w}^k  \eqdef \frac{1}{k}\sum_{t=1}^{k} \w_t$ and ${\overline\theta}^k  \eqdef \frac{1}{k}\sum_{t=1}^{k} \theta_t$,
we have the following rates.

\begin{restatable}[Convergence rates]{proposition}{earlystoppinggap_exact}\label{rate}
	Let $\varepsilon \in(0,1)$ and assume that  the step-sizes are such that $\sigma \tau \leq \varepsilon/\opnor{\X}^{2}$. Then
	\begin{align*}
    &\cL({\overline\w}^k, \thetas) - \cL(\ws, {\overline\theta}^k) \ \leq \ \frac{\sqrt{V(z_0 - \zs)}}{k}\quad\quad  \text{and} \\
    & \nor{\X {\overline\w}^k - \Y}^2 \leq \frac{2 (1 + \varepsilon) V(z_0 - \zs)}{\sigma\varepsilon (1 - \varepsilon) k} \enspace.
	\end{align*}
\end{restatable}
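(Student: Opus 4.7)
The plan is to cast \eqref{eq:cp} as the Chambolle--Pock algorithm applied to $\min_w J(w) + \iota_{\{\Y\}}(\X w)$ (as done in the proof of \Cref{prop:cp}) and then to specialize the classical ergodic primal--dual gap bound of \citet{Chambolle_Pock11}, or equivalently the noise-free instance of the inexact analysis of \citet{rasch2020inexact} on which the paper relies in later sections. The core technical step is a one-iteration descent inequality: combining the two prox conditions defining $\w_{k+1}$ and $\theta_{k+1}$ and applying Young's inequality on the bilinear cross term yields, for every test point $z=(w,\theta)$ and with $R_{k+1}\ge 0$ a stability remainder,
\begin{equation*}
    \cL(\w_{k+1},\theta) - \cL(w,\theta_{k+1}) + V(z_{k+1}-z) + (1-\varepsilon)R_{k+1}
    \le V(z_k-z) + T_{k+1} - T_k,
\end{equation*}
where $T_k = \langle \X(\w_k-\w_{k-1}),\theta-\theta_k\rangle$ telescopes through the extrapolation $2\theta_k - \theta_{k-1}$. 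The coefficient $1-\varepsilon$ arises from absorbing the cross term into $\frac{1}{2\tau}\|\w_{k+1}-\w_k\|^2$ and $\frac{1}{2\sigma}\|\theta_{k+1}-\theta_k\|^2$ via the step-size condition $\sigma\tau\opnor{\X}^2\le\varepsilon$.

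Summing this inequality from $k=0$ to $K-1$, the $T_k$'s telescope, the remaining non-negative contributions $V(z_K-z)$ and $\sum_k R_k$ are dropped, and Jensen's inequality (convexity of $\cL(\cdot,\theta)$ and concavity of $\cL(w,\cdot)$) produces the ergodic bound
\begin{equation*}
    \cL(\overline\w^K,\theta) - \cL(w,\overline\theta^K) \le \frac{V(z_0 - z)}{K}, \qquad \forall z\in\R^p\times\R^n.
\end{equation*}
The first estimate then follows by choosing $z=\zs$: the right-hand side is $V(z_0-\zs)/K$, while the left-hand side is non-negative by \eqref{eq:pdgap}.

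For the residual, set $\theta = \thetas + \alpha r$ with $r \eqdef \X\overline\w^K - \Y$ and $\alpha>0$ a free scalar. Since $\X\ws=\Y$, one has $\cL(\ws,\overline\theta^K) = J(\ws)$; since $(\ws,\thetas)$ is a saddle-point, $\cL(\overline\w^K,\thetas) \ge J(\ws)$. These two observations collapse the left-hand side of the ergodic bound to exactly $\alpha\|r\|^2$, leaving
\begin{equation*}
    \alpha\|r\|^2 \le \frac{V\bigl(z_0 - (\ws,\thetas+\alpha r)\bigr)}{K}.
\end{equation*}
Expanding $\|\theta_0 - \thetas - \alpha r\|^2$ via Young's inequality with a parameter tied to $\varepsilon$ makes $\|r\|^2$ reappear on the right-hand side with coefficient proportional to $1/(\sigma K)$; a suitable choice of $\alpha$ (again tied to $\varepsilon$) followed by rearranging then produces the second estimate with the precise constant $2(1+\varepsilon)/(\sigma\varepsilon(1-\varepsilon))$.

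The delicate part of the argument is the one-step descent inequality: closing it with the exact coefficient $(1-\varepsilon)$ on the stability remainder requires carefully balancing $\langle \X(\w_{k+1}-\w_k),\theta_{k+1}-\theta\rangle$ against the quadratic terms and tracking the telescoping term $T_k$ through the induction without losing constants elsewhere. Once that inequality is in place, the ergodic bound and the two specializations are routine; the dependence of the residual constant on $\varepsilon$ is determined entirely by the Young parameter used in the last step.
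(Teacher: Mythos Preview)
For the duality-gap bound your approach coincides with the paper's (set $\delta=0$ in \Cref{early}, i.e.\ the one-step inequality of Lemma~\ref{onestep} plus the telescoping of Lemma~\ref{cum1}); one slip is that since \eqref{eq:cp} extrapolates on $\theta$, the telescoping cross term should be $\langle \theta_k-\theta_{k-1},\X(w-w_k)\rangle$, not your $T_k=\langle \X(w_k-w_{k-1}),\theta-\theta_k\rangle$.

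For the residual bound your route is genuinely different. The paper does \emph{not} shift the dual test point in the ergodic gap; it proves a separate cumulative estimate (Lemma~\ref{cum2}) at the iterate level: substituting $\theta_{t+1}-\theta_t=\sigma(\X w_{t+1}-\Y)$ and $\tilde\theta_t=\theta_t+\sigma(\X w_t-\Y)$ into the one-step inequality makes $\tfrac{\sigma}{2}\nor{\X w_{t+1}-\Y}^2$ appear explicitly, the cross term $\sigma\langle\X(w_{t+1}-w_t),\X w_{t+1}-\Y\rangle$ is controlled by Young with parameter~$\varepsilon$, summing leaves $\tfrac{\sigma\varepsilon(1-\varepsilon)}{2(1+\varepsilon)}\sum_t\nor{\X w_t-\Y}^2\le V(z_0-\zs)$, and Jensen gives the stated constant exactly. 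Your gap-plus-shifted-dual argument is also sound and, with $\alpha$ proportional to $\sigma K$, actually yields $\nor{\X\overline{\w}^K-\Y}^2=O(1/K^2)$, stronger than the paper's $O(1/K)$. However, your claim to recover the \emph{precise} constant $2(1+\varepsilon)/(\sigma\varepsilon(1-\varepsilon))$ is unsupported: in your argument $\varepsilon$ enters only through the step-size condition needed to close the one-step inequality, not through any Young step at the feasibility stage, so the constant you obtain is governed by your free parameters $\alpha,\gamma$ rather than by $\varepsilon$. What the paper's route buys is per-iterate control of $\sum_t\nor{\X w_t-\Y}^2$---which is what makes the extension to the noisy \Cref{early} work---together with the explicit $\varepsilon$-form; what yours buys is a sharper rate on the averaged residual in the exact case.
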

The first result is classical (see \cite{Chambolle_Pock11}).
Alternatively, it can be obtained by setting $\delta=0$ in \Cref{early}, where we study the more general inexact case.
To the best of our knowledge, the second bound is new and can also be derived by setting $\delta=0$ in \Cref{early}.
A similar result, in the more specific case of primal-dual coordinate descent, can be found in \citet{Fercoq_Bianchi19}.
Note that both results of \Cref{rate} are true for every primal-dual solution.
On the other hand, the left-hand-side in the second equation does not depend on the selection of $\zs$ and so the bound can be improved by taking the $\inf$ over all primal-dual solutions.

\subsection{Inexact case}\label{inexactsec}
We now consider the iterates $(\w_k, \theta_k)$, and their averaged versions $(\overline{\w}_k, \overline{\theta}_k)$, obtained by applying iterations \eqref{eq:cp} to the noisy problem, where $\Y$ is replaced by $\Y^\delta$ with $\nor{\Y^{\delta}-\Y}\leq \delta$.
In \Cref{early}, we derive early-stopping bounds for the iterates, in terms of duality gap $\cL(\w_k, \thetas)-\cL(\ws, \theta_k)$ and residual norm $\nor{\X\w_k-\Y}$.
We highlight that, despite the error in the data $\Y^{\delta}$, both quantities are defined in terms of $\Y$ and hence related to the \emph{noiseless} problem.
In particular, $(\ws, \thetas)$ is a saddle-point for the \emph{noiseless} Lagrangian. 
We have the following estimates, whose proofs are given in \Cref{sec_early}.

\begin{restatable}[Stability]{proposition}{earlystoppinggap}\label{early}
	Let $\varepsilon \in(0,1)$ and assume that  the step-sizes are such that $\sigma \tau \leq \varepsilon/\opnor{\X}^{2}$. Then,
	\begin{equation}
	\begin{split}
	\cL({\overline\w}^k, \thetas) - \cL(\ws, {\overline\theta}^k) & \leq \tfrac{1}{k}\left(\sqrt{V(z_0 - \zs)}  +  \sqrt{2\sigma} \delta k\right)^2
	\end{split}
	\end{equation}
	and
    \begin{equation}\label{est_feas}
	\begin{split}
    &\nor{\X {\overline\w}^k - \Y}^2
    \leq \frac{2(1 + \varepsilon)}{\sigma \varepsilon (1 - \varepsilon)} \Bigg[ \sqrt{2\sigma V(z_0 - \zs)}\delta
    \\ &\hspace*{1.5cm}  + \frac{\sigma \varepsilon}{1 - \varepsilon}\delta^2 + 2\sigma \delta^2 k + \frac{1}{k} V(z_0 - \zs) \Bigg].
	\end{split}
	\end{equation}
\end{restatable}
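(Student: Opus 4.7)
The plan is to view iteration~\eqref{eq:cp} applied to noisy data $\Y^\delta$ as an inexact Chambolle--Pock scheme applied to the \emph{noiseless} saddle-point problem, with an additive perturbation $e_\delta := \Y^\delta - \Y$ ($\|e_\delta\|\le\delta$) entering only the dual update, and to follow the inexact primal--dual analysis of \citet{rasch2020inexact}. Combining the first-order optimality of the proximal primal step with the (linear) noisy dual step yields, for every $z=(w,\theta)\in\R^p\times\R^n$, a per-iteration descent inequality
\begin{equation*}
\cL(w_{t+1},\theta)-\cL(w,\theta_{t+1}) + V(z_{t+1}-z) + R_t \le V(z_t-z) + \langle\theta-\theta_{t+1},e_\delta\rangle,
\end{equation*}
where $R_t\ge 0$ is the standard non-negative residual obtained by absorbing the Chambolle--Pock mixed term $\langle \X(w_{t+1}-w_t),\theta_t-\theta_{t-1}\rangle$ via Young's inequality under the step-size condition $\sigma\tau\opnor{\X}^2\le\varepsilon$; crucially, this calibration makes $R_t$ dominate a positive multiple of $\tfrac{1}{\sigma}\|\theta_{t+1}-\theta_t\|^2$, a fact I will reuse for the feasibility bound. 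Summing over $t=0,\dots,k-1$, invoking Jensen's inequality in the primal argument and linearity in the dual to introduce the ergodic averages, and telescoping $V(z_t-z)$, I obtain
\begin{equation*}
k\bigl[\cL(\overline{\w}^k,\theta)-\cL(w,\overline{\theta}^k)\bigr] + V(z_k-z) + \sum\nolimits_{t} R_t \le V(z_0-z) + k\langle\theta-\overline{\theta}^k,e_\delta\rangle.
\end{equation*}

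For the duality-gap bound I specialize $z=\zs$ and exploit non-negativity of $\cL(\cdot,\thetas)-\cL(\ws,\cdot)$ at a saddle point. A first, a priori use of the per-iteration inequality yields the recursion $V(z_{t+1}-\zs) \le V(z_t-\zs) + \delta\sqrt{2\sigma V(z_{t+1}-\zs)}$, obtained by Cauchy--Schwarz on the noise term together with the pointwise estimate $\|\thetas-\theta_{t+1}\|\le \sqrt{2\sigma V(z_{t+1}-\zs)}$. An Ostrowski-type completion of the square (setting $a_t=\sqrt{V(z_t-\zs)}$ and using $\sqrt{a_t^2+c^2}\le a_t+c$) resolves this implicit inequality to the linear growth estimate $\sqrt{V(z_t-\zs)}\le\sqrt{V(z_0-\zs)}+\sqrt{2\sigma}\,\delta\,t$. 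Plugging this back into the summed inequality and bounding the cumulative noise $k\delta\|\thetas-\overline{\theta}^k\|\le \delta\sum_{t=1}^k \sqrt{2\sigma V(z_t-\zs)}$ using the growth estimate produces $kG_k\le V(z_0-\zs)+2k\delta\sqrt{2\sigma V(z_0-\zs)}+2\sigma\delta^2 k^2$, which regroups as the perfect square $\bigl(\sqrt{V(z_0-\zs)}+\sqrt{2\sigma}\delta k\bigr)^2$ and gives the first claim after dividing by $k$.

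For the feasibility estimate I retain the $\sum_t R_t$ residual in the summed inequality rather than discarding it. The noisy dual update $\theta_{t+1}-\theta_t=\sigma(\X w_{t+1}-\Y^\delta)$ together with the elementary bound $\|\X w_{t+1}-\Y^\delta\|^2\ge \tfrac12\|\X w_{t+1}-\Y\|^2-\delta^2$, combined with Jensen's inequality applied to $\|\X(\cdot)-\Y\|^2$ on the ergodic average, gives $\sum_t R_t \ge C\,\sigma k\|\X\overline{\w}^k-\Y\|^2 - C'\sigma\delta^2 k$ for explicit constants $C,C'$ determined by the $\varepsilon$-calibration. Substituting this lower bound into the summed inequality at $z=\zs$, using the growth estimate from the previous paragraph to control the noise term $k\delta\|\thetas-\overline{\theta}^k\|$, and solving for $\|\X\overline{\w}^k-\Y\|^2$ produces the claimed bound with the prefactor $\tfrac{2(1+\varepsilon)}{\sigma\varepsilon(1-\varepsilon)}$. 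The main obstacle I anticipate is the Ostrowski-type resolution of the implicit quadratic inequality for $\sqrt{V(z_t-\zs)}$ and the careful bookkeeping of Young's-inequality constants throughout: these constants must line up so that the first bound collapses into the tight perfect square and the prefactor $\tfrac{2(1+\varepsilon)}{\sigma\varepsilon(1-\varepsilon)}$ appears exactly (rather than as a looser step-size-dependent constant) in the feasibility bound.
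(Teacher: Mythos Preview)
Your plan matches the paper's strategy: treat the noisy iteration as inexact Chambolle--Pock with error only in the (non-extrapolated) dual step, derive a one-step descent inequality in the style of \citet{rasch2020inexact}, turn it into a linear growth bound on $\|\theta_t-\thetas\|$, and substitute that bound back into the summed inequality, finishing with Jensen on the averages. The differences are tactical. First, for the growth bound the paper does \emph{not} work per iteration: it sums the one-step estimate (telescoping the cross term) to reach the cumulative inequality $\|\theta_t-\thetas\|^2\le 2\sigma V(z_0-\zs)+2\sigma\delta\sum_{j\le t}\|\theta_j-\thetas\|$ and then applies a discrete Bihari lemma, whereas you propose an Ostrowski-type resolution of a per-iteration recursion $a_{t+1}^2\le a_t^2+\sqrt{2\sigma}\,\delta\,a_{t+1}$. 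Both routes yield the same bound $\|\theta_t-\thetas\|\le\sqrt{2\sigma V(z_0-\zs)}+2\sigma\delta t$. Second, the paper uses two \emph{separate} cumulative inequalities with different handling of the cross term: telescoping it for the duality-gap bound, and rewriting $\theta_{t+1}-\tilde\theta_t=\sigma\X(w_{t+1}-w_t)$ and applying Young with parameter $\varepsilon$ (and a second Young with $\eta=\tfrac{1+\varepsilon}{1-\varepsilon}$ on the $\langle \X w_{t+1}-\Y,\Y^\delta-\Y\rangle$ term) to retain $\tfrac{\sigma\varepsilon}{2\eta}\sum_t\|\X w_t-\Y\|^2$ on the left for the feasibility bound---this is precisely what produces the prefactor $\tfrac{2(1+\varepsilon)}{\sigma\varepsilon(1-\varepsilon)}$.

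One caution on your unified scheme: a clean per-iteration inequality with $R_t\ge 0$ does not quite fall out of the CP analysis as you state it. The mixed term either telescopes only \emph{after} summing (as in the paper's first cumulative estimate), or, if absorbed per step via the $\varepsilon$-Young splitting (as in the paper's second estimate), it consumes a fraction of $\|\X w_{t+1}-\Y\|^2$ and leaves an additional $O(\sigma\delta^2)$ on the right-hand side---this extra term is exactly the $\tfrac{\sigma(\eta-1)}{2}\delta^2$ in the paper and the source of the $\tfrac{\sigma\varepsilon}{1-\varepsilon}\delta^2$ contribution in~\eqref{est_feas}. So your recursion for $a_t$ will in fact carry an $O(\delta^2)$ constant; with that small adjustment your argument goes through and recovers the stated constants.
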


Note that, in the exact case $\delta=0$, we recover the convergence results stated in \Cref{rate}.
Moreover, we have the following corollary.
\begin{corollary}[Early-stopping]\label{earlstop}
	Under the assumptions of \Cref{early}, choose $k = c/\delta$ for some $c>0$.
	Then there exist constants $C, \ C'$ and $C''$ such that
	\begin{align*}
	&\cL({\overline\w}^k, \thetas) - \cL(\ws, {\overline\theta}^k) \ \leq \ C \delta \text{ and}\\
	&\nor{\X {\overline\w}^k - \Y}^2 \leq C' \delta + C'' \delta^2 \enspace.
	\end{align*}
\end{corollary}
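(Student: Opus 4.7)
The plan is a direct substitution of $k = c/\delta$ into the two bounds of \Cref{early}, followed by collecting terms by powers of $\delta$. There is no genuine obstacle here; the corollary is a rescaling statement that exhibits the optimal trade-off between the convergence rate (decreasing in $k$) and the noise propagation (increasing in $k$) that the inexact bounds of \Cref{early} already exhibit.

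For the first inequality, I would start from
\[
\cL({\overline\w}^k, \thetas) - \cL(\ws, {\overline\theta}^k) \leq \frac{1}{k}\left(\sqrt{V(z_0 - \zs)} + \sqrt{2\sigma}\,\delta k\right)^2
\]
and substitute $k = c/\delta$. The crucial observation is that the cross-term $\sqrt{2\sigma}\,\delta k$ becomes the constant $\sqrt{2\sigma}\,c$, so the squared quantity is independent of $\delta$, and the prefactor $1/k = \delta/c$ yields an overall bound of the form $C\delta$ with
\[
C \;=\; \frac{1}{c}\bigl(\sqrt{V(z_0 - \zs)} + \sqrt{2\sigma}\,c\bigr)^2.
\]

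For the feasibility bound, I would substitute $k = c/\delta$ term by term in
\[
\nor{\X {\overline\w}^k - \Y}^2 \leq \frac{2(1+\varepsilon)}{\sigma\varepsilon(1-\varepsilon)}\left[\sqrt{2\sigma V(z_0-\zs)}\,\delta + \frac{\sigma\varepsilon}{1-\varepsilon}\delta^2 + 2\sigma\delta^2 k + \frac{1}{k}V(z_0-\zs)\right].
\]
The two $k$-dependent terms become $2\sigma\delta^2 k = 2\sigma c\,\delta$ and $V(z_0-\zs)/k = V(z_0-\zs)\delta/c$, both of which are linear in $\delta$. Lumping these with $\sqrt{2\sigma V(z_0-\zs)}\,\delta$ produces the $C'\delta$ contribution, while the single $\delta^2$ term, $\frac{\sigma\varepsilon}{1-\varepsilon}\delta^2$, becomes the $C''\delta^2$ contribution. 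Explicitly,
\[
C' \;=\; \frac{2(1+\varepsilon)}{\sigma\varepsilon(1-\varepsilon)}\Bigl(\sqrt{2\sigma V(z_0-\zs)} + 2\sigma c + V(z_0-\zs)/c\Bigr), \qquad C'' \;=\; \frac{2(1+\varepsilon)}{(1-\varepsilon)^2}.
\]
The only subtlety, if any, is that $c/\delta$ is not in general an integer; a standard fix is to take $k = \lceil c/\delta\rceil$ and absorb the resulting constants, which only changes the explicit expressions for $C, C', C''$ but not the stated scalings in $\delta$.
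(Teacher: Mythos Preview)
Your proposal is correct and matches the paper's approach exactly: the paper does not spell out a proof but simply remarks that the constants are those of \Cref{early}, which is precisely the direct substitution you perform. Your explicit identification of $C$, $C'$, $C''$ and the remark about the integer part of $c/\delta$ are both fine and go slightly beyond what the paper writes.
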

The constants appearing in the Corollary are the ones from \Cref{early}.
They only depend on the saddle-point $\zs$, the initialization $z_0$ and the step-sizes $\tau, \sigma$.
We next add some remarks.
	\begin{remark}\label{rem:control_stronglycvx}
		When $\rr$ is $\gamma$-strongly convex, in particular when $J(\cdot)=\frac12 \nor{\cdot}^2$, both the residual norm and the distance between the averaged iterate and the solution can be controlled by $\cL({\overline\w}^k, \thetas) - \cL(\ws, {\overline\theta}^k)$. Indeed, recalling \Cref{breggap},
		\begin{align*}
        \nor{\X {\overline\w}^k - \Y}^2
        &\leq\nor{\X}^2\nor{{\overline\w}^k-\ws}^2
        \\ &\leq \tfrac{2\nor{\X}^2}{\gamma} D_J^{-\X^\top\thetas}\left({\overline\w}^k,\ws\right) \\
        &= \tfrac{2\nor{\X}^2}{\gamma}\left[\cL({\overline\w}^k, \thetas) - \cL(\ws, {\overline\theta}^k)\right] \enspace.
		\end{align*}
		In particular, the previous early-stopping bounds are of the same order of the ones obtained by dual gradient descent in \citet{matet2017don}.
	\end{remark}
	\begin{remark}
		Similar estimates have been obtained in \citet{burger2007error}, both for the Tikhonov variational scheme and for the Bregman iteration (also called inverse scale space method) with stopping-criteria given by the discrepancy principle.
        In the first case (see Theorem 3.1), for a suitable choice of the regularization 		parameter, the authors get similar estimates for the Tikhonov regularized solution $w_\lambda$:  $D_J^{s}\left(\w_{\lambda},\ws\right) \leq C\delta$ and
		$\nor{\X\w_{\lambda}-\Y}^2\leq C'\delta^2$, where $D^{s}_J$ is the symmetric Bregman
		divergence. For the Bregman iteration (see Theorem 4.2), they get an early-stopping bound on $D_J^{p_k}(\ws,\w_k)$, where $p_k\in\partial J(\w_k).$
		Note that they do not get any estimate for the quantity $D_J^{-\X^\top\thetas}\left(\w^k,\ws\right)$ neither for the residual norm. Moreover, the method requires to solve, at each iteration, an optimization problem with the same complexity of the original one.
	\end{remark}

{\bf Proof Sketch\ }  The proof of \Cref{early} is inspired by \citet{rasch2020inexact}.
In this paper, the kind of errors allowed in the prox of the non-extrapolated step ($\theta_k$ update) are more general than the ones allowed for the extrapolated step ($w_k$ update).
Here, we study stability properties of algorithm \eqref{eq:cp} when $\Y$ is replaced by $\Y^\delta$.
This change can be read as an inexact proximity operator in the update of $\theta$ computation; in order to have this error in the non-extrapolated step, we study algorithm \eqref{eq:cp}, that is CP algorithm applied to the dual problem.
We summarize here the main steps.
In \Cref{onestep}, we derive a ``descent property'' for every step $t$,  which we then cumulate summing from $t=1$ to $t=k$ and using two different approximations (\Cref{cum1,cum2}).
The two bounds that we get are similar, but independent.
The first one has the following form,
\begin{align}\label{eq:cumulative}
&\frac{1}{2\sigma}\nor{\theta_{k} - \thetas}^2 + \sum_{t=1}^{k}\left[\mathcal{L}(\w_t,\thetas) - \mathcal{L}(\ws, \theta_t)\right]
\leq \nonumber \\ & \hspace*{2cm} V(z_0 - \zs) +  \delta \sum_{t=1}^{k}\nor{\theta_t - \thetas} \enspace.
\end{align}

We use the latter twice.
First we combine it with \Cref{lem:u_n_upper_bound}, a discrete version of Bihari's Lemma. This allows to estimate, for every $1\leq t\leq k$, the quantity
\begin{align}
\nor{\theta_t - \thetas}   \leq 2\sigma\delta k + \sqrt{2\sigma V(z_0 - \zs)} \enspace.
\end{align}
Then we use again \Cref{eq:cumulative}, joint with the previous information, to find a bound on $\sum_{t=1}^{k}\left[\mathcal{L}(\w_t,\thetas) - \mathcal{L}(\ws, \theta_t)\right]$.
The second inequality (see \Cref{cum2}) has the following form,
\begin{align}
\frac{\sigma\alpha}{2\eta}\sum_{t=1}^{k}\nor{\X \w_{t}- \Y}^2  &\leq  V(z_0 - \zs)  + \delta \sum_{t=1}^{k}\nor{\theta_t - \theta} \nonumber\\
& \hspace*{0.3cm} + \frac{1}{2}\sigma \left(\eta-1\right)\delta^2k\enspace.
\end{align}
Using again the bound on $\nor{\theta_t - \thetas}$ and choosing $\eta = \left(1 + \varepsilon\right)/\left(1 - \varepsilon\right)$, we find an estimate for $\sum_{t=1}^{k}\nor{\X \w_{t}- \Y}^2$.
In both cases, we get the claim on the averaged iterates by Jensen's inequality.

\subsection{An example: sparse recovery}
\label{subsec:l1}

In the case of sparse recovery ($\rr = \nor{\cdot}_1$), controlling the duality gap and the feasability yields a bound on the distance to the minimizer, thanks to the following result \cite[Lemma 3.10]{grasmair2011necessary}.

\begin{lemma}\label{lem:grasmair}
	Let $(\ws, \thetas)$ be such that $\X\ws = \Y$ and $- \X^\top\thetas \in \partial \nor{\cdot}_1 (\ws)$.
	With $\Gamma \eqdef \{j \in [p]: \abs{\X_{:j}^\top \thetas} = 1 \}$, assume that $\X_\Gamma$ ($\X$ restricted to columns whose indices lie in $\Gamma$) is injective.
	Let $m \eqdef \max_{j \notin \Gamma} \abs{\X_{:j}^\top \thetas} < 1$.
	Then, for all $\w\in\R^p$,
	\begin{align}
	\nor{\w - \ws}
    &\leq \opnor{\X^{-1}_\Gamma} \nor{\X \w - \Y} \nonumber\\
    &\hspace*{0mm} + \tfrac{1 + \opnor{\X^{-1}_\Gamma} \opnor{\X}}{1 - m} D^{-\X^\top\thetas}_{\nor{\cdot}_1}(\w, \ws) \enspace.
	\end{align}
\end{lemma}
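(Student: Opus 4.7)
\textbf{Proof plan for \Cref{lem:grasmair}.} The plan is to split $\w-\ws$ into its components on $\Gamma$ and on $\Gamma^c$, bound the first using the injectivity of $\X_\Gamma$ and the residual, and bound the second using the Bregman divergence. The subgradient $-\X^\top\thetas$ has the crucial property that its coordinates are $\pm 1$ on $\Gamma$ and strictly less than $1$ in absolute value off $\Gamma$, which is what turns the Bregman divergence into a norm control off-support.

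First I would unpack the Bregman divergence. Setting $\theta\eqdef -\X^\top\thetas$, the subgradient inclusion $\theta\in\partial\nor{\cdot}_1(\ws)$ gives $\langle\theta,\ws\rangle=\nor{\ws}_1$ and $|\theta_j|\le 1$ for all $j$. A direct computation then yields
\begin{equation*}
D^{-\X^\top\thetas}_{\nor{\cdot}_1}(\w,\ws)=\nor{\w}_1-\langle\theta,\w\rangle=\sum_{j}\bigl(|\w_j|-\theta_j\w_j\bigr).
\end{equation*}
On $\Gamma$ each summand is nonnegative (since $|\theta_j|\le 1$), and on $\Gamma^c$ it satisfies $|\w_j|-\theta_j\w_j\ge (1-|\theta_j|)|\w_j|\ge(1-m)|\w_j|$. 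Summing only the off-support terms gives
\begin{equation*}
\nor{\w_{\Gamma^c}}\le\nor{\w_{\Gamma^c}}_1\le\tfrac{1}{1-m}\,D^{-\X^\top\thetas}_{\nor{\cdot}_1}(\w,\ws),
\end{equation*}
using $\nor{\cdot}_2\le\nor{\cdot}_1$ in the first step. This handles the off-support component.

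Next I would handle the on-support component. Since $-\X^\top\thetas\in\partial\nor{\cdot}_1(\ws)$ forces the support of $\ws$ to lie in $\Gamma$, we have $\ws_{\Gamma^c}=0$. The identity $\X(\w-\ws)=\X\w-\Y$ rewrites as $\X_\Gamma(\w_\Gamma-\ws_\Gamma)=\X\w-\Y-\X_{\Gamma^c}\w_{\Gamma^c}$. Injectivity of $\X_\Gamma$ gives a left inverse $\X_\Gamma^{-1}$ with operator norm $\opnor{\X_\Gamma^{-1}}$, so
\begin{equation*}
\nor{\w_\Gamma-\ws_\Gamma}\le \opnor{\X_\Gamma^{-1}}\bigl(\nor{\X\w-\Y}+\opnor{\X}\,\nor{\w_{\Gamma^c}}\bigr),
\end{equation*}
after bounding $\opnor{\X_{\Gamma^c}}\le\opnor{\X}$. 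Combining the two bounds via the triangle inequality $\nor{\w-\ws}\le\nor{\w_\Gamma-\ws_\Gamma}+\nor{\w_{\Gamma^c}}$ and collecting the $\nor{\w_{\Gamma^c}}$ terms yields the claimed estimate with the coefficient $(1+\opnor{\X_\Gamma^{-1}}\opnor{\X})/(1-m)$ in front of the Bregman divergence.

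The only subtle step is the Bregman-divergence inequality off the support: one has to notice that the specific subgradient $-\X^\top\thetas$, unlike a generic element of $\partial\nor{\ws}_1$, satisfies a strict inequality $|\theta_j|\le m<1$ outside $\Gamma$, and that this strict margin is exactly what is needed to recover the $\ell_1$-norm of $\w_{\Gamma^c}$ (and hence, via $\nor{\cdot}_2\le\nor{\cdot}_1$, its Euclidean norm). Everything else is linear algebra and the triangle inequality.
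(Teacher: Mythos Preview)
Your proof is correct. The paper does not prove this lemma itself; it is quoted from \citet[Lemma~3.10]{grasmair2011necessary}, and your argument---splitting $\w-\ws$ into its $\Gamma$ and $\Gamma^c$ parts, using the strict dual margin $1-m$ to turn the Bregman divergence into an $\ell_1$ bound on $\w_{\Gamma^c}$, and using injectivity of $\X_\Gamma$ to control the on-support part through the residual---is exactly the standard route taken there.
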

Note that, under the assumptions of \Cref{lem:grasmair}, the primal solution to \Cref{eq:prob} is unique (see \cite[Thm 4.7]{grasmair2011necessary}). Combining the latter with \Cref{earlstop} yields a strong early-stopping result.
\begin{corollary}[Early-stopping for $J = \nor{\cdot}_1$]\label{earlstop_sparse_recov}
	Under the assumptions of \Cref{early,lem:grasmair}, choose $k = c/\delta$ for $c>0$.
	Then there exist constants $C'$ and  $C''$ such that
	\begin{equation*}
	\nor{\overline{\w}^k - \ws} \leq C'\sqrt{ \delta} + C'' \delta \enspace.
	\end{equation*}
\end{corollary}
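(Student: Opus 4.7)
The plan is to combine Lemma~\ref{lem:grasmair} with Corollary~\ref{earlstop} applied to the averaged iterate $\overline{\w}^k$ under the early-stopping schedule $k = c/\delta$.

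First, I would identify the duality gap $\cL(\overline{\w}^k, \thetas) - \cL(\ws, \overline{\theta}^k)$ with the Bregman divergence $D^{-\X^\top\thetas}_{\nor{\cdot}_1}(\overline{\w}^k, \ws)$. Indeed, expanding the Lagrangian~\eqref{eq:lagrangian} and using $\X\ws = \Y$ yields
\begin{equation*}
\cL(\overline{\w}^k, \thetas) - \cL(\ws, \overline{\theta}^k) = J(\overline{\w}^k) - J(\ws) - \langle -\X^\top\thetas, \overline{\w}^k - \ws\rangle,
\end{equation*}
which is exactly $D^{-\X^\top\thetas}_J(\overline{\w}^k, \ws)$ since $-\X^\top\thetas \in \partial J(\ws)$ by the saddle-point condition \eqref{optcond2}. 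This identification is standard and essentially routine given the setup.

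Next, I would apply Lemma~\ref{lem:grasmair} with $\w = \overline{\w}^k$, which gives
\begin{equation*}
\nor{\overline{\w}^k - \ws} \leq \opnor{\X_\Gamma^{-1}}\,\nor{\X\overline{\w}^k - \Y} + \tfrac{1 + \opnor{\X_\Gamma^{-1}}\opnor{\X}}{1-m}\, D^{-\X^\top\thetas}_{\nor{\cdot}_1}(\overline{\w}^k, \ws).
\end{equation*}
Then I substitute the two bounds from Corollary~\ref{earlstop}: the Bregman term is controlled by $C\delta$, and the feasibility term is controlled via $\nor{\X\overline{\w}^k - \Y} \leq \sqrt{C'\delta + C''\delta^2} \leq \sqrt{C'}\sqrt{\delta} + \sqrt{C''}\delta$ using subadditivity of the square root. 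Collecting constants, the $\sqrt{\delta}$ contribution comes solely from the residual norm (which enters as a square root of Corollary~\ref{earlstop}'s $O(\delta)$ bound), while the Bregman term and the $\delta^2$ part of the residual yield only $O(\delta)$ contributions. This produces the claimed $\nor{\overline{\w}^k - \ws} \leq C'\sqrt{\delta} + C''\delta$ with new constants absorbing $\opnor{\X_\Gamma^{-1}}$, $\opnor{\X}$, $m$, and the constants of Corollary~\ref{earlstop}.

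I do not anticipate any real obstacle: the statement is essentially a mechanical concatenation of three ingredients (Lemma~\ref{lem:grasmair}, Corollary~\ref{earlstop}, and the duality-gap/Bregman identification). The only point requiring a moment of care is that the feasibility bound in Corollary~\ref{earlstop} is quadratic in the residual, so taking its square root is what produces the $\sqrt{\delta}$ rate in the final bound; the Bregman term alone would give the faster $O(\delta)$ rate, so the $\sqrt{\delta}$ is really coming from the loss incurred by passing through approximate feasibility rather than any intrinsic slowdown in the Bregman convergence.
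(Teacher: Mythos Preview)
Your proposal is correct and follows exactly the route the paper indicates: the paper simply states that combining \Cref{lem:grasmair} with \Cref{earlstop} yields the result, and the Bregman/duality-gap identification you use is precisely \Cref{breggap}. Your observation that the $\sqrt{\delta}$ rate arises from taking the square root of the feasibility bound is the only nontrivial point, and you have it right.
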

The constants $C', C''$ depend on the saddle-point $\zs$, the initialization $z_0$, the step-sizes $\tau, \sigma$ and the norms of $\X$ and $\X^{-1}_\Gamma$.
A completely different approach has been considered, for the same problem, in \citet{Vaskevicius_Kanade_Rebeschini19}. A related  approach, based on dynamical systems, has been proposed in \citet{osher2016sparse}. Similar results for the Tikhonov regularization approach can be found in \citet{schlor19}.

\begin{figure}[t]
    \centering
    \includegraphics[width=0.5\linewidth]{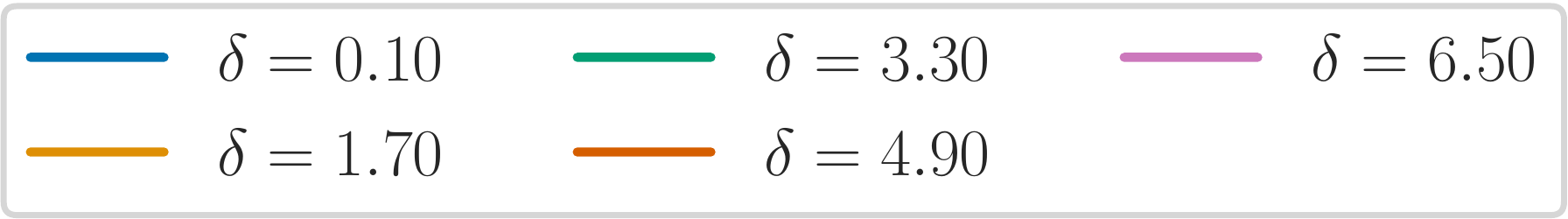}
    \includegraphics[width=0.5\linewidth]{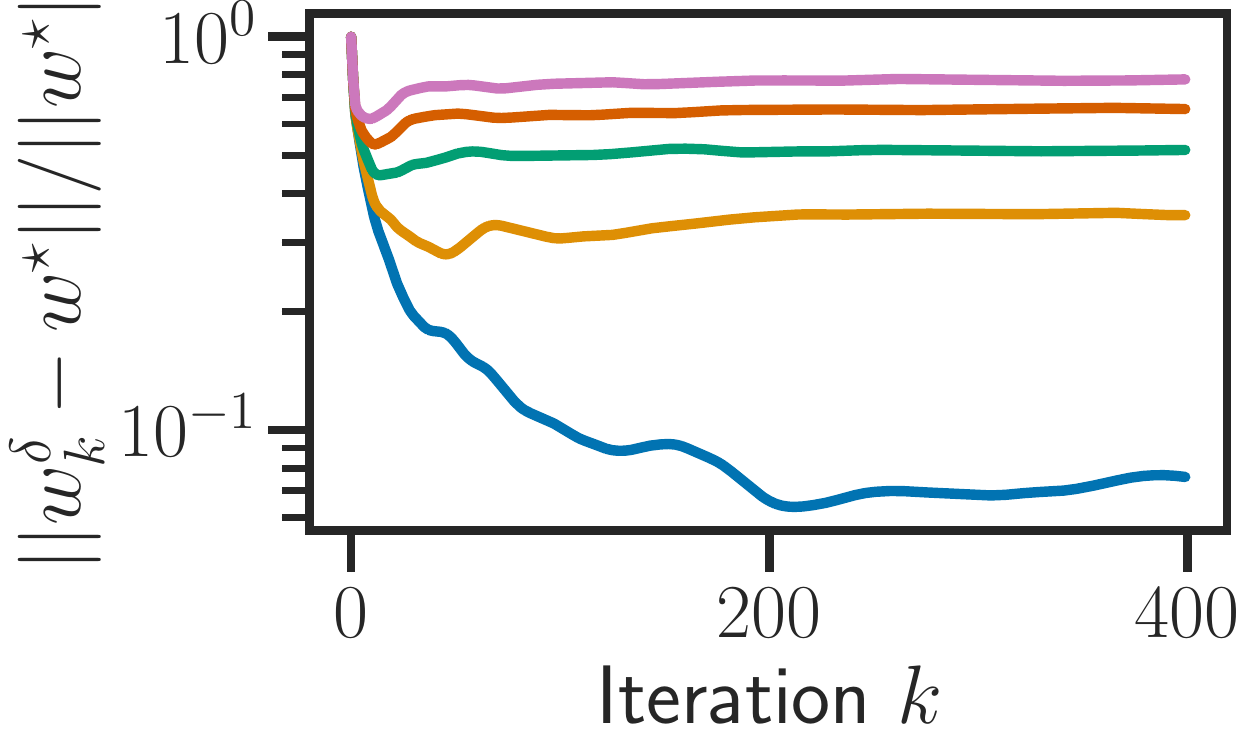}
    \caption{Distance between noisy Basis Pursuit iterates $\w_k^\delta$ and noiseless solution $\ws$, for various values of $\delta$.
    There exists a stopping time: these distances reach a minimum before converging to their limit.}
    \label{fig:existence_stopping_time}
\end{figure}

\section{Empirical analysis}
We stress that there is no implicit regularization result dealing with any non strongly convex $J$ to compare to.
The only competitor is therefore the Tikhonov approach.
The code is made available in the supplementary material as a python package with scripts to reproduce the experiments (relying heavily on numpy \citep{harris2020array} and numba \citep{numba}).


\subsection{Sparse recovery}
Random data for this experiment are generated as follows: $(n, p) = (200, 500)$, columns of $\X$ are Gaussian with $\cov(\X_{:i}, \X_{:j}) = 0.2^{|i - j|}$, $\Y = \X w_0$ where $w_0$ has 75 equal non zero entries, scaled such that $\nor{\Y} = 20$ (in order to have a meaningful range of values for $\delta$).
Note that the linear system $\X\w=\Y^\delta$ has solutions for any $\Y^\delta$, since $\X$ is full-rank.
The noiseless solution $\ws$ is determined by running algorithm \eqref{eq:cp} up to convergence, on $\Y$.
For the considered values of $\delta$, $\Y^\delta$ is created by adding i.i.d. Gaussian noise to $\Y$, so that $\nor{\Y - \Y^\delta} = \delta$.
We denote by $\w_k^\delta$ the iterates of algorithm \eqref{eq:cp} ran on $\Y^\delta$.

{\bf Existence of stopping time.}
In the first experiment, we highlight the existence of an optimal iterate in terms of distance to $\ws$.
\Cref{fig:existence_stopping_time} shows \emph{semi-convergence}: before converging to their limit, the iterates get close to $\ws$.
Note that this is stronger than the results of \Cref{earlstop_sparse_recov}, since the optimal iteration here is the minimizer of the distance, and not some iterate for which there exists an upper bound on the distance to $w^\star$.
As expected, as $\delta$ decreases, the optimal iteration $k$ increases and the optimal iterate $w_k^\delta$ is closer to $\ws$.

{\bf Dependency of empirical stopping time on $\delta$.}
In the same setting as above, for 20 values of $\delta$ between 0.1 and 6, we generate $100$ values of $\Y^\delta.$
We run algorithm \eqref{eq:cp} for 5000 iterations on $\Y^\delta$ and determine the empirical best stopping timel as $k^\star(\delta) = \argmin_k \nor{\w_k^\delta - \ws} < +\infty$.
\Cref{fig:stopping_time_delta} shows the mean of the inverse empirical stopping time as a function of $\delta$, where a clear linear trend ($k = c / \delta$) appears as suggested by \Cref{early} and \Cref{earlstop_sparse_recov}.

\begin{figure}[t]
    \centering
    \includegraphics[width=0.6\linewidth]{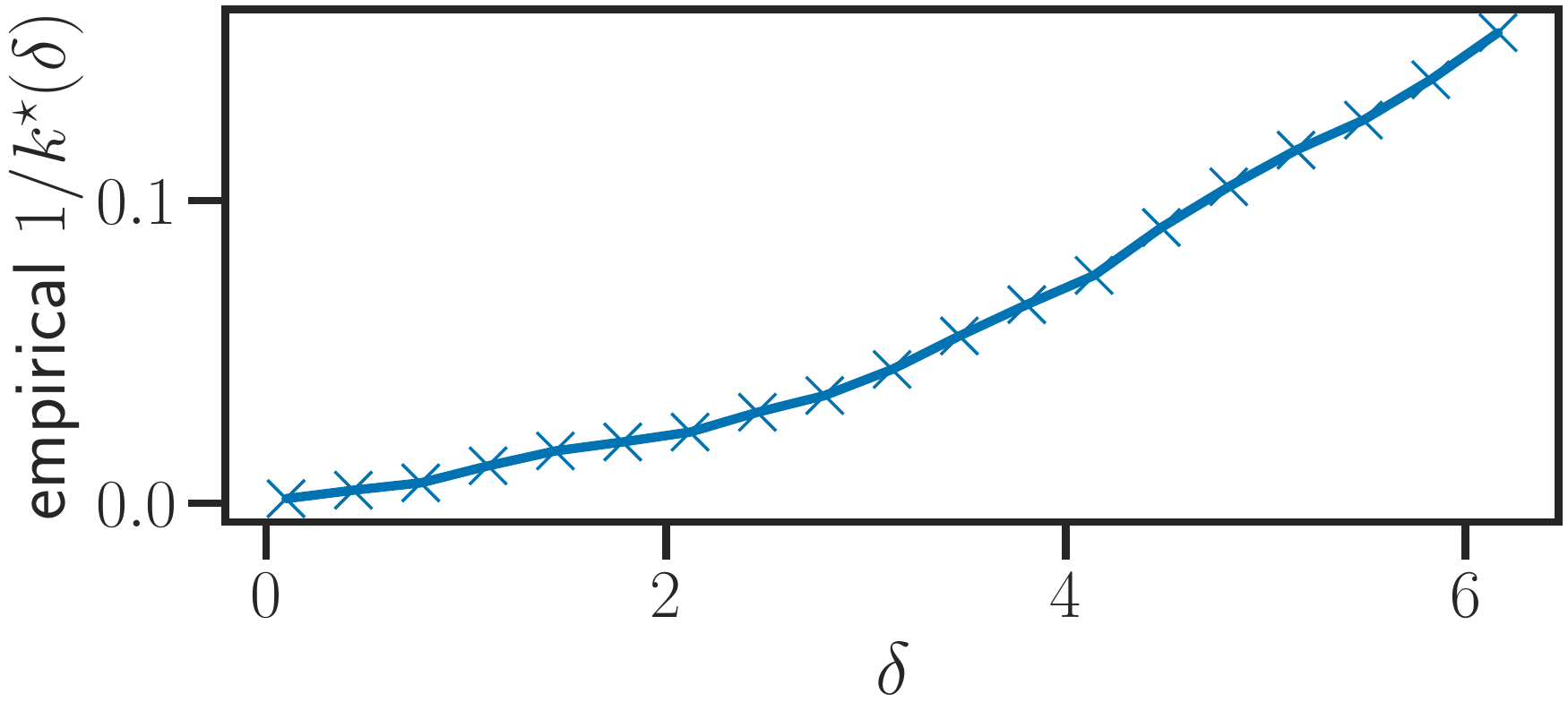}
    \caption{Influence of $\delta$ on the stopping time. In agreement with theory, the empirical stopping time roughly scales as $1 / \delta$.
    }
    \label{fig:stopping_time_delta}
\end{figure}

{\bf Comparison with Tikhonov approach on real data.}
The most popular approach to address stability is to solve \Cref{pb:tikhonov} (here, the Lasso) for, typically\footnote{default grid in scikit-learn \citep{Pedregosa_etal11} and GLMNET \citep{Friedman_Hastie_Tibshirani10} packages} 100 values of $\lambda$ geometrically chosen as $\lambda_t = 10^{-3t/99}\nor{\X^\top \Y}_\infty$ for $t=0, \ldots, 99$.
In \Cref{fig:cv_vs_cp} we compare the Lasso regularization path to the Basis Pursuit optimization path of the Chambolle-Pock algorithm.
The dataset for this experiment is rcv1-train from libsvm \citep{Fan_Chang_Hsieh_Wang_Lin08}, with $(n, p) = (\num{20242}, \num{26683})$.
The figure of merit is the mean squared error on left out data, using 4-fold cross validation (dashed color lines), with the average across the folds in black.
The horizontal line marks the $\lambda$ (resp. the iteration $k$) for which the Lasso path (resp. the optimization path of Algorithm \eqref{eq:cp}) reaches its minimum MSE on the test fold.

The first observation is that the Basis Pursuit solution (both the end of the optimization ($k = +\infty$) and regularization paths ($\lambda = 0$)) performs very poorly, having a MSE greater than the one obtained by the 0 solution For the bottom plot, this would also be visible if the number of iterations of Algorithm \eqref{eq:cp} was picked greater than 500, which we do not do for readability of the figure.
It is therefore necessary to early stop.
The second observation is that the minimal MSEs on both paths are similar: 0.19 for Lasso path, 0.21 for optimization path of Algorithm \eqref{eq:cp}.
The main point is however that it takes 20 iterations of algorithm \eqref{eq:cp} to reach its best iterate, while the optimal  $\lambda$ for the Lasso is around $\lambda_{\max} / 100$.
If the default grid of 100 values between $\lambda_{\max}$ and $\lambda_{\max} / 1000$ was used, this means that 66 Lassos must be solved, each one needing hundreds or thousands of iterations to converge.
This is reflected in the timings: 0.5~s for Algorithm \eqref{eq:cp} vs 50~s for Tikhonov, eventhough we use a state-of-the-art coordinate descent + working set approach to solve the Lasso, with warm-start (using the solution for $\lambda_{t-1}$ as initialization for problem with $\lambda_t$).

\begin{figure}[t]
    \centering
    \includegraphics[width=0.5\linewidth]{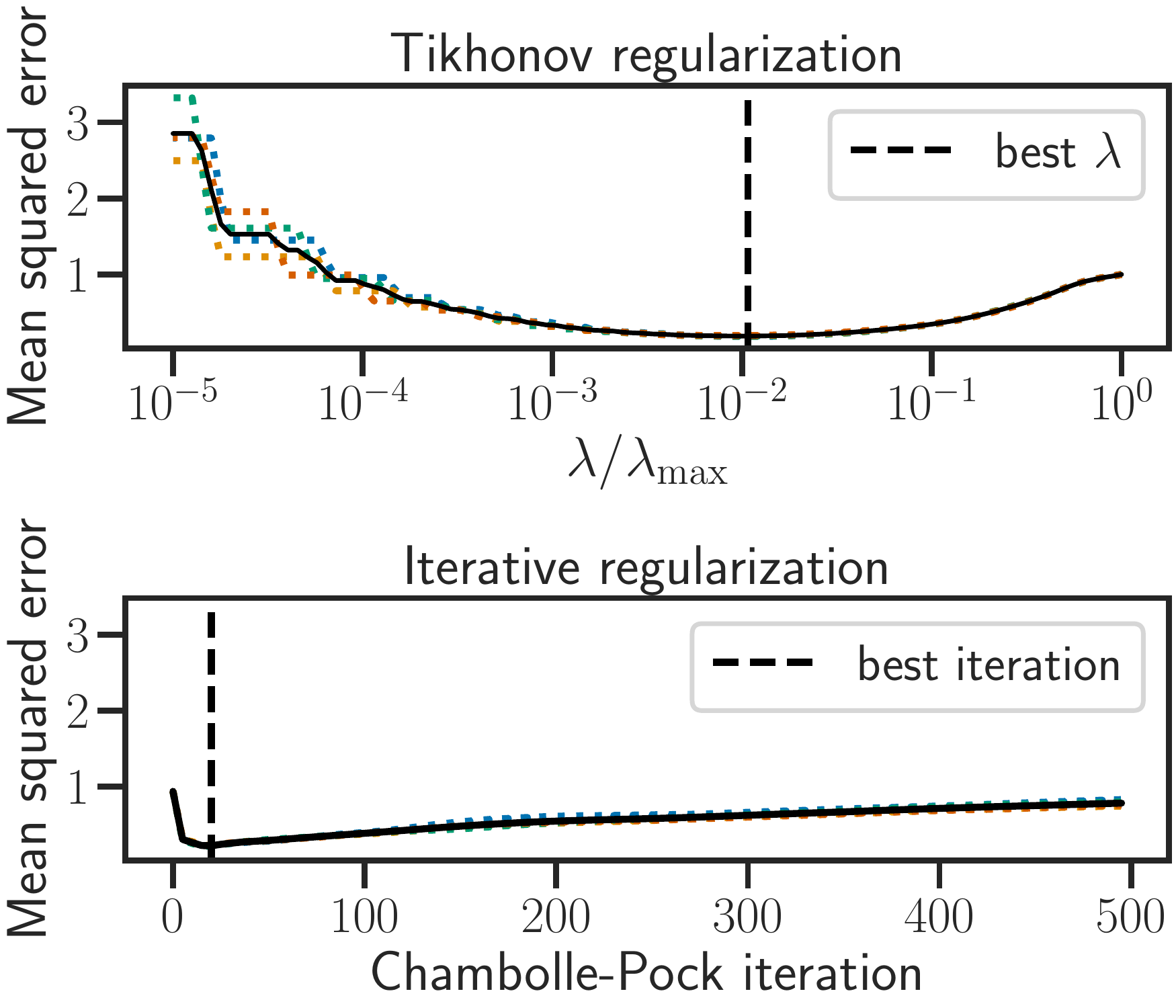}
    \caption{Comparison of Tikhonov regularization path (top) and optimization path of Algorithm \eqref{eq:cp}
     (bottom) on rcv1 with 4-fold cross-validation. Minimal value reached: 0.19 (top), 0.21 (bottom). Computation time up to optimal parameter: 50 s (top); 0.5 s (bottom).}
    \label{fig:cv_vs_cp}
\end{figure}

\subsection{Low rank matrix completion}

Random data for this experiment is generated as follows: the matrices are $d \times d$ with $d = 20$.
$\mathbf Y$ is equal to $U V^\top$ with $U$ and $V$ of size $d \times 5$, whose entries are i.i.d Gaussian ($\mathbf Y$ is rank 5).
We scale $\mathbf Y$ such that $\nor{\mathbf{Y}} = 20$.
Recall that in low rank matrix completion (\Cref{ex:lowrank}), $\X$ corresponds to a masking operator (the observed entries); to determine which entries are observed, we uniformly draw $d^2 / 5$ observed couples $(i, j) \in [d] \times [d]$.
\Cref{fig:semiconv_lowrank} shows the same type of results as \Cref{fig:existence_stopping_time}: iterates first approach the noiseless solution, then get further away, justifying early stopping of the iterates.
For this experiment, we use higher values for $\delta$ to better highlight the semiconvergence, as curves get flatter for e.g. $\delta = 5.7$.
Note that in that case, the algorithm can still be early stopped to save computations.

\begin{figure}[tbp]
    \centering
    \includegraphics[width=0.5\linewidth]{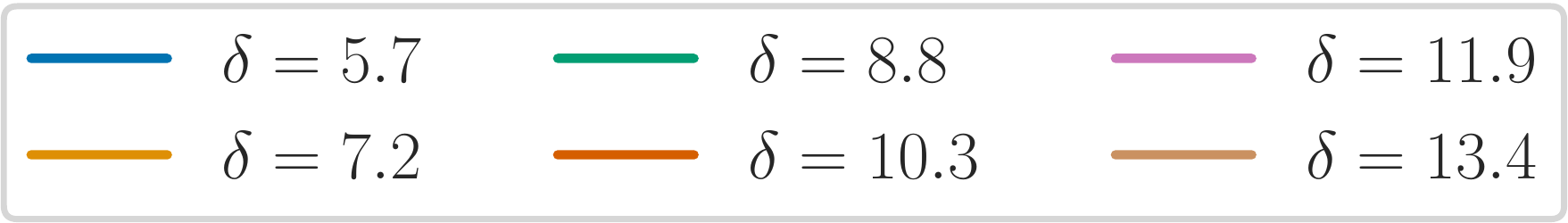}
    \includegraphics[width=0.5\linewidth]{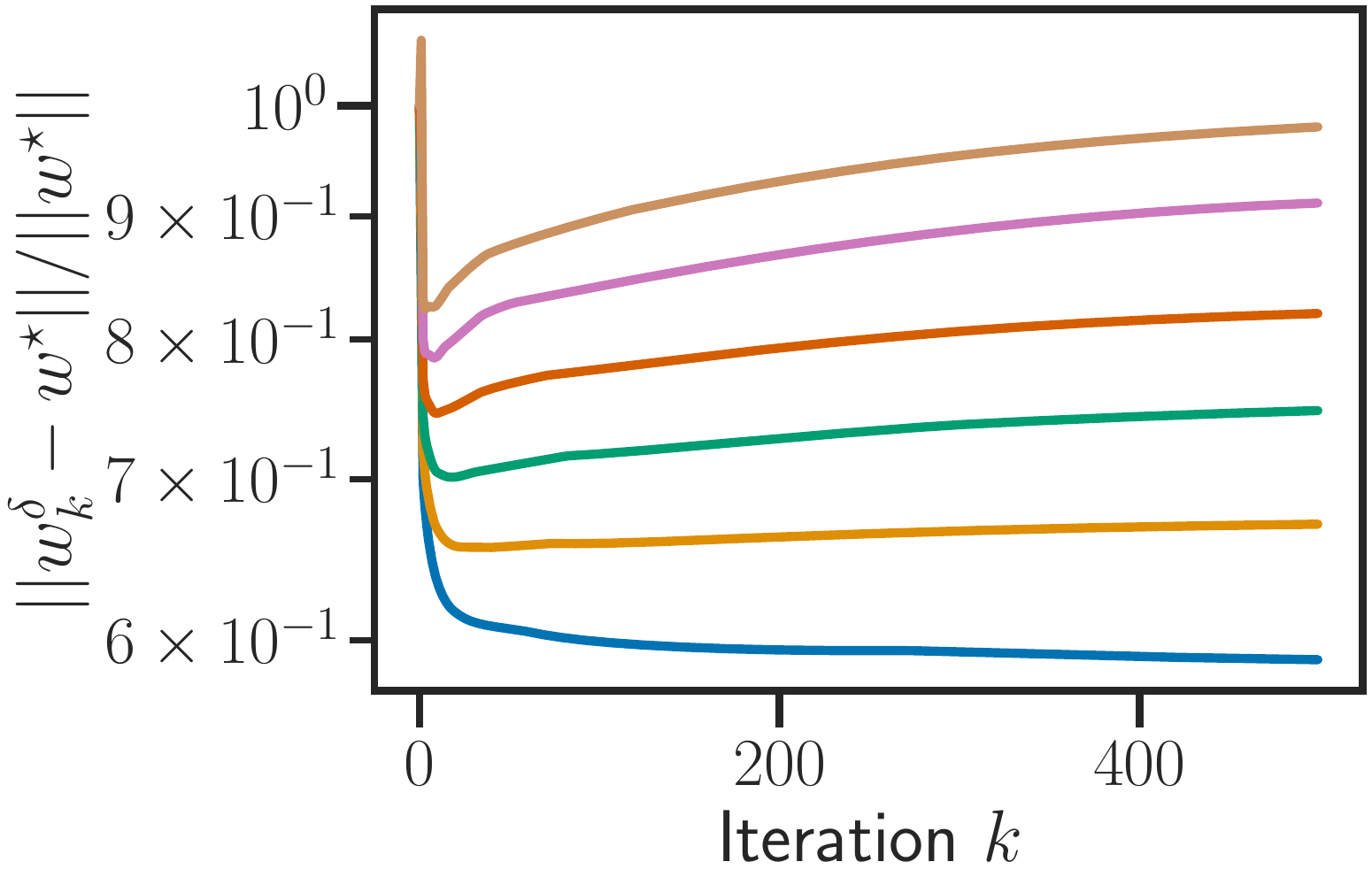}
    \caption{Distance between low rank matrix completion iterates $\w_k^\delta$ and noiseless solution $\ws$, for various values of $\delta$.
    There exists a stopping time: a minimum before distance is reached before the limit.}
    \label{fig:semiconv_lowrank}
\end{figure}

\section{Conclusion}
We have studied implicit regularization for convex bias, not necessarily strongly convex nor smooth.
We proposed to use the Chambolle-Pock algorithm and we analyzed both convergence and stability to deterministic worst case noise.
Our general analysis was specialized, as an example, to the problem of sparse recovery.
The approach was  investigated empirically both for sparse recovery and matrix completion, showing great timing improvements over relaxation approaches.
A future development is to consider more specific noise models than the worst-case, such as stochastic noise.
We emphasize again that our results hold in infinite dimension.
It would be interesting to specialize our analysis in the finite dimensional setting, when the noisy solution always exists (in the least-square sense) and so the iterates produced by the algorithm are bounded.
Moreover, it would be interesting to consider additional assumptions such as sparsity.
Considering the role of initialization or nonlinear models would also be of interest.
Finally, it would complete the analysis to obtain lower bounds for this class of problems, to confirm the sharpness of our results.

\newpage
\section*{Acknowledgments}
This material is based upon work supported by the Center for Brains, Minds and Machines (CBMM), funded by NSF STC award CCF-1231216, and the Italian Institute of Technology.
Part of this work has been carried out at the Machine Learning Genoa (MaLGa) center, Universit\`a di Genova (IT).
L. R. acknowledges the financial support of the European Research Council (grant SLING 819789), the AFOSR projects FA9550-17-1-0390  and BAA-AFRL-AFOSR-2016-0007 (European Office of Aerospace Research and Development), and the EU H2020-MSCA-RISE project NoMADS - DLV-777826.
S. V. acknowledges the support of INDAM-GNAMPA, Project 2019: ``Equazioni integro-differenziali:
aspetti teorici e applicazioni''.

\bibliographystyle{plainnat}

\newpage
\onecolumn
\appendix

\section{Detailed discussion of related works}\label{sec:app_related_works}

The idea of exploiting the implicit regularizing properties of optimization algorithms is not new, and has been studied in three different related areas, often under the name of iterative regularization: inverse problems, image restoration, and machine learning.
The related results can roughly be divided in
those assuming strong convexity of $J$ and those assuming only convexity of $J$.
Related approaches to implicit regularization include  diagonal strategies and exact regularization approaches.
Extensions to general data fits and non-convex/non-linear problems have been considered. In the following we briefly review existing results.

{\bf $\bullet$ Gradient and stochastic descent.} The study of implicit regularization properties of gradient descent, known in the inverse problem community as Landweber method,  goes back to the 50s \citep{engl1996regularization}. The classical result shows that gradient descent applied to least squares and initialized at $0$ converges to the minimal norm solution of the linear equation \eqref{eq:interpolation}. Accelerated versions have been also studied under the name of $\nu$-method \citep{engl1996regularization}.
Generalization towards more general regularizers, apart from $p$ norms with $p>1$, has not been considered much by this community,
while there is a rich literature in the non-convex setting for nonlinear inverse problems \citep{kaltenbacher2008iterative}.
These ideas have been extended to machine learning  considering regularizing properties of gradient descent \citep{yao2007early} and its stochastic versions  \citep{moulines2011non,rosasco2015learning}.

{\bf $\bullet$ Linearized Bregman and Mirror descent.} The interest in more general regularizers has been mainly motivated by imaging applications and total variation regularization.  Starting from \cite{OshBurGol05}
there is an entire line of work devoted to iterative regularization  for general convex regularizers (see e.g. \cite{burger2007error} and references therein). We briefly review the available algorithms and their advantages and limitations.
If {strong convexity of $J$ is assumed}, the algorithm of choice is mirror descent \citep{Nemirovski_Yudin83,BecTeb03}.
It has been popularized in the inverse/imaging problems community
under the name of linearized Bregman iteration \citep{YinOshBur08}. It has been shown that  this algorithm in combination with a discrepancy type stopping rule
regularizes ill posed problems. The stability and regularization properties of the accelerated variant of the algorithm have been studied also in \citet{matet2017don}, using a different approach,  based on the interpretation of the method as a gradient descent applied to the dual problem \eqref{pb:dual2}. Similar ideas  can be found in \citep{SchLor18}.

{\bf $\bullet$ Bregman iteration and ADMM.}  If the regularizer $J$ is not strongly convex, but only convex, as in our case, the algorithm above cannot be applied.
The algorithm of choice is in this context ADMM \cite{BoyParChu10}, which  has been studied in the imaging community under the name of Bregman iteration.
Its regularization properties can be found in \cite{burger2007error}. However, this method has a main  drawback: at each iteration the solution of a nontrivial optimization problem of the form  $\min \|\X \w-\Y\|^2 +\rr(\w)+\langle \w,\eta\rangle$, for $\eta\in\R^p$ is required, and in general la subroutine is needed at each iteration.
In the setting where $n$ is big, this can  have a high computational cost.
The extension of this approach to nonlinear inverse problems has been considered in \citet{bachmayr2009iterative}.

{\bf $\bullet$ Bregmanized Operator Splitting and linearized/preconditioned ADMM.} These are variants of Bregman iteration and ADMM very similar to the CP algorithm: they rely on preconditioning to avoid the solution of a difficult optimization problem at each iteration. These have been used empirically as regularizing procedures in the context of inverse and imaging problems \citep{ZhaBurOsh11,ZhaBurBre10}. We are not aware of any theoretical quantitative stability result.

{\bf $\bullet$ Diagonal approaches} The implicit regularization techniques described above are well-suited for problems where the quadratic data fit is appropriate.
If other losses are used, this approach completely neglect them. A way to circumvent this problem is to use a diagonal strategy. The idea is to combine an
optimization algorithm with a sequence of approximations of the original problem \eqref{eq:prob} which change at each iteration \citep{BahLem94}.
 Convergence rates and stability of diagonal approaches for inverse problems have been considered in \citet{garrigos2018iterative,calatroni2019accelerated}.

{\bf $\bullet$ Sparse recovery and compressed sensing} In the context of sparse recovery the implicit regularization approach has been considered in
\cite{osher2016sparse}, and also in \cite{Vaskevicius_Kanade_Rebeschini19}.
Matching pursuit \cite{Mallat_Zhang93} is a computational procedure which can be used to select relevant components, but
it is not clear from the theoretical point of view how to early stop the iterations \cite{}.

{\bf $\bullet$ Exact regularization}
Another possible approach is to use the notion of exact regularization \citep{friedlander2008exact,schopfer2012exact}.
The latter refers to solving
$  \min_{\w} \rr(\w) + \alpha Q(\w), \quad \text{s.t.} \quad \X \w = \Y \enspace,$
where $Q$ is strongly convex and to showing that there exists a value of $\alpha$ such that this new problem and \Cref{pb:cvxbias} have the same minimizer.
Then, known iterative regularization results of the strongly convex case \citep{matet2017don} can be applied.

\section{Duality and Chambolle-Pock algorithm}
\label{sec:more_chambolle_pock}

\subsection{Duality}
\label{app:duality}

The Chambolle-Pock algorithm belongs to the class of primal-dual methods, designed to jointly solve \Cref{pb:cvxbias} (the primal problem), and its dual.
\Cref{pb:cvxbias} rewrites as
\begin{problem}
    \label{eq:prob}
    \argmin_{\w \in \R^p} \rr(\w)+\iota_{\{\Y\}}\left(\X\w\right) \enspace,
\end{problem}
with Lagrangian
\begin{equation}\label{eq:lagrangian}
    \cL(w, \theta) = J(w) + \langle \theta, \X w - \y \rangle \enspace.
\end{equation}
Fenchel-Rockafellar duality \cite[Sec. 3.6.2]{Pey15} can be applied to compute the dual problem; observing that $\iota_{\{\Y\}}^\star\left(\theta\right) = \langle \Y, \theta \rangle$, this dual reads:
\begin{problem}\label{pb:dual2}
	\argmin_{\theta \in \R^n} \left\{\rr^\star(-\X^\top \theta) + \iota_{\{\Y\}}^\star\left(\theta\right) \right\}
	= \argmin_{\theta \in \R^n} \left\{ \rr^\star(-\X^\top \theta) + \langle \Y, \theta \rangle \right\} \enspace.
\end{problem}
Assume that \Cref{eq:prob} admits a solution $\ws$ satisfying the following \textit{qualification condition},
\[
\tag{QC}
(\exists \thetas\in\R^n) \quad
-\X^\top\thetas \in \partial \rr (\ws) \enspace.\\
\]
Reasoning as in the proof of the Fenchel-Rockafellar duality theorem \cite[Thm. 3.51]{Pey15} it follows that strong duality holds, and $\thetas$ is a solution of \Cref{pb:dual2}. Primal-dual solutions are thus characterized by the first order conditions,
\begin{equation}\label{optcond1}
-\X^\top\thetas \in \partial \rr (\ws) \quad \text{and}\quad \X \ws = \Y \,.
\end{equation}
We stress the fact that we assume the existence of a solution $\ws$ satisfying (QC), but the primal solution is not necessarily unique.
On the other hand, from strong duality we get also that, for every primal solution, there exists a dual one such that (QC) (and so \Cref{optcond1,optcond2}) is verified.

\subsection{Chambolle-Pock algorithm}
\label{app:cp}
Consider the generic optimization problem
\begin{problem}\label{pb:rockafellar}\label{pb:prim_app}
	\min_\x \left\{ f(\x) + g(K\x) \right\},
\end{problem}
with Fenchel-Rockafellar dual problem given by
\begin{problem}\label{pb:dual_app}
	\min_\y \left\{ f^\star(-K^\top y) + g^\star(y) \right\}.
\end{problem}
In this general case, the Chambolle-Pock's algorithm (with interpolation parameter equal to 1) is given by
\begin{equation*}
\begin{split}
y_{k + 1} & = \prox_{\tau g^\star}(y_k + \tau K (2 x_{k} - x_{k - 1})),\\
\x_{k + 1} & = \prox_{\sigma f}(x_{k} - \sigma K^\top y_{k + 1}).
\end{split}
\end{equation*}
Notice that the CP algorithm, except for the interpolation, treats the primal and the dual problem in a symmetric way.
In particular, we can cast the method both for \Cref{pb:prim_app,pb:dual_app}.
In order to apply the latter to our dual problem, we set $f=\langle \Y, \cdot \rangle$, $g = \rr^\star$ and $K=-\X^\top$. Then $g^\star=\rr$, $\prox_{\sigma f}(\theta) = \theta - \sigma \Y$ and we recover \Cref{eq:cp}:
\begin{align*}
&\w_{k + 1} = \prox_{ \tau J } \left(\w_k - \tau \X^\top \left(2 \theta_k - \theta_{k - 1}\right) \right),\\
&\theta_{k + 1} = \theta_k + \sigma \left(\X\w_{k + 1} - \Y^\delta\right).
\end{align*}
The latter uses, in the update of the variable $\w$, an interpolation of $\theta$ with the value at the previous step.\\
As we already remarked, we could also apply the CP algorithm directly to the primal problem, setting $f=\rr$, $g=\iota_{\{\Y\}}$ and $K=\X$. Then $g^\star=\langle \Y, \cdot \rangle$ and $\prox_{\tau g^\star}(\theta) = \theta - \tau \Y$, leading to the following method:
\begin{align*}
&\theta_{k + 1} = \theta_k + \tau \left(\X\left(2\w_{k}-\w_{k-1}\right) - \Y^\delta\right),\\
&\w_{k + 1} = \prox_{ \sigma J } \left(\w_k - \sigma \X^\top\theta_{k + 1} \right).
\end{align*}
In this case, in the update of the variable $\theta$, we use an interpolation of $\w$.
In general, the two versions should not differ in a significant manner.
Nevertheless, the error we consider affects the data $\Y^{\delta}$ and so its nature is not symmetric.
Then, a different choice for the interpolation can play a role.
In this work, we put emphasis in Algorithm \eqref{eq:cp} because it is the one for which we have proximal errors in the non-extrapolated step.

\section{Proofs}

\subsection{Lemmas}
\label{sub:app_lemmas}
\optimalitylemma*
\begin{proof} \ \\
	{\em Step 1: the duality gap is the Bregman divergence.}
	Indeed, using $-\X^\top \thetas \in \partial \rr(\ws)$ and $\X \ws = \Y$:
	\begin{align}\label{breggap}
	\cL(\w', \thetas) - \cL(\ws, \theta')
	&= \rr(\w') - \rr(\ws) + \langle \thetas, \X \w' - \Y \rangle - \langle \theta', \X  \ws - \Y \rangle  \nonumber \\
	&= \rr(\w') - \rr(\ws) + \langle \X^\top \thetas, \w' - \ws \rangle = D_\rr^{-\X^\top \thetas}(\w', \ws)\enspace,
	\end{align}
	{\em Step 2: Zero duality gap plus feasibility implies primal optimality}
	We show that  if $\bar{v}\in\partial \rr(\ws)$ and $D_{\rr}^{\bar{v}}(\w',\ws)=0$, then $\bar{v}\in\partial \rr(\w')$.
	Indeed, $\rr(\w')-\rr(\ws)-\langle \bar{v}, \w'-\ws\rangle = 0$ and so, for all $z\in\R^p$,
	\begin{equation}\label{e2}
	J(z) \geq J(\ws) + \langle \bar{v}, z-\ws \rangle = J(\w') - \langle \bar{v}, \w'-\ws \rangle+ \langle \bar{v}, z-\ws \rangle = J(\w') + \langle \bar{v}, z-\w' \rangle.
	\end{equation}
	The statement follows by applying step 2 with  $\bar{v}=-\X^\top \thetas$.
\end{proof}

Next, we recall the result that allows us to control the non-vanishing error.
It is a discrete version of Bihari's Lemma and a particular case of Lemma 1 in \cite{schmidt2011convergence}, where the proof can be found.\\

\begin{lemma}\label{lem:u_n_upper_bound}
	Assume that $(u_j)$ is a non-negative sequence and that $\lambda\geq 0,  S\geq 0$ with $S \geq u_0^2$.
	If $u_t^2\leq S + \lambda \sum_{j=1}^{t}  u_j$, then
	\begin{equation*}
	\begin{split}
	u_t & \leq \frac{\lambda t}{2} + \left[S + \left( \frac{\lambda t}{2}\right)^2\right]^{\frac{1}{2}}.
	\end{split}
	\end{equation*}
	So, in particular,
	\begin{equation*}
	\begin{split}
	u_t & \leq \lambda t+ \sqrt{S}.
	\end{split}
	\end{equation*}
\end{lemma}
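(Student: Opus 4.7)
The plan is to reduce the discrete integral inequality to a quadratic inequality by passing to the running maximum. Since $u_t^2$ is bounded by an expression that is non-decreasing in $t$ (as all $u_j \geq 0$), the natural auxiliary quantity is $v_t \eqdef \max_{0 \leq s \leq t} u_s$. First I would observe that for every $s \leq t$,
\begin{equation*}
u_s^2 \ \leq \ S + \lambda \sum_{j=1}^{s} u_j \ \leq \ S + \lambda \sum_{j=1}^{t} u_j,
\end{equation*}
using only the hypothesis of the lemma and non-negativity of the $u_j$. Taking the maximum over $s \in \{0, \dots, t\}$ on the left (the $s=0$ case is covered by the assumption $S \geq u_0^2$) and then bounding $u_j \leq v_t$ inside the sum on the right, I obtain the key one-line estimate
\begin{equation*}
v_t^2 \ \leq \ S + \lambda t\, v_t.
\end{equation*}

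Next I would solve this quadratic inequality in $v_t$. The polynomial $X^2 - \lambda t X - S$ has positive leading coefficient and non-positive value at $X = v_t$, so $v_t$ lies below the larger root, giving
\begin{equation*}
v_t \ \leq \ \frac{\lambda t}{2} + \sqrt{S + \Bigl(\tfrac{\lambda t}{2}\Bigr)^{2}}.
\end{equation*}
Since $u_t \leq v_t$, this yields the first claimed bound. The second, looser bound follows from sub-additivity of the square root, $\sqrt{a+b} \leq \sqrt{a} + \sqrt{b}$ for $a,b \geq 0$, applied to $a = (\lambda t/2)^2$ and $b = S$, which absorbs the $\lambda t/2$ factor to give $u_t \leq \lambda t + \sqrt{S}$.

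There is essentially no obstacle here once the correct auxiliary quantity is introduced: the only subtlety is making sure that the maximum is well-defined at $t=0$, which is precisely why the hypothesis $S \geq u_0^2$ is imposed (otherwise the base case would fail and the induction on the running maximum would break). As the paper notes, this is a special case of Lemma 1 of \citet{schmidt2011convergence}, and the argument above is the standard Bihari-type reduction; no induction on $t$ is strictly needed beyond the monotonicity observation.
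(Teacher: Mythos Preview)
Your argument is correct. Note that the paper does not actually give its own proof of this lemma: it explicitly defers to Lemma~1 of \citet{schmidt2011convergence}, and the running-maximum reduction you carry out (set $v_t=\max_{s\le t}u_s$, derive $v_t^2\le S+\lambda t\,v_t$, solve the quadratic) is exactly the argument given there, so there is nothing further to compare.
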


\subsection{Preliminary estimates}

\begin{lemma}[One step estimate]\label{onestep}
	Defining $\tilde \theta_k \eqdef 2 \theta_k - \theta_{k - 1}$, the updates of \eqref{eq:cp} for the noisy problem read as:
	\begin{align}\label{ppp1}
	&\w_{k + 1} = \prox_{ \tau J } \left(\w_k - \tau \X^\top \tilde \theta_k \right) \enspace,\\
	&\theta_{k + 1} = \theta_k + \sigma \left(\X\w_{k + 1} - \Y^\delta\right) \enspace.\label{ppp2}
	\end{align}
	Then, for any $(\w, \theta)\in\R^p\times\R^n$, we have the following estimate:
	\begin{equation}\label{mainonestep}
	\begin{split}
	&
	V(z_{k + 1} - z) - V(z_k - z) +  V(z_{k + 1} - z_k) +  \left[\cL(\w_{k + 1}, \theta) - \cL(\w, \theta_{k + 1})\right]+ \langle \theta_{k + 1} - \theta, \Y^\delta - \Y \rangle \\
	& + \langle \theta_{k + 1} - \tilde \theta_k, \X\left(\w - \w_{k + 1}\right)\rangle \leq 0.
	\end{split}
	\end{equation}
\end{lemma}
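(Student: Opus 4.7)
The plan is to derive the inequality from two optimality relations, one per update in \eqref{ppp1}--\eqref{ppp2}, and then convert inner products of increments into differences of weighted squared norms via the cosine (three-point) identity, so that the quadratic quantities reassemble into $V$.

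First I would exploit the proximal update. Writing $\w_{k+1}=\prox_{\tau J}(\w_k-\tau \X^\top\tilde\theta_k)$ as a subdifferential inclusion, I get $\tfrac{1}{\tau}(\w_k-\w_{k+1})-\X^\top\tilde\theta_k\in\partial J(\w_{k+1})$. The convexity inequality for $J$ evaluated at an arbitrary $\w$ then yields
\begin{equation*}
J(\w_{k+1})-J(\w)+\langle \tilde\theta_k,\X(\w_{k+1}-\w)\rangle \leq \tfrac{1}{\tau}\langle \w_k-\w_{k+1},\w_{k+1}-\w\rangle.
\end{equation*}
The dual update, rewritten as $\X\w_{k+1}-\Y^\delta=\tfrac{1}{\sigma}(\theta_{k+1}-\theta_k)$, gives me the complementary identity; the crucial observation is to decompose
$\X\w_{k+1}-\Y=\tfrac{1}{\sigma}(\theta_{k+1}-\theta_k)+(\Y^\delta-\Y)$, which is where the worst-case noise term $\langle \theta_{k+1}-\theta,\Y^\delta-\Y\rangle$ that appears in the statement will originate.

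Next I would rewrite $\mathcal{L}(\w_{k+1},\theta)-\mathcal{L}(\w,\theta_{k+1})$ by adding and subtracting $\langle \theta_{k+1},\X\w_{k+1}\rangle$, obtaining
\begin{equation*}
\mathcal{L}(\w_{k+1},\theta)-\mathcal{L}(\w,\theta_{k+1}) = J(\w_{k+1})-J(\w)+\langle \theta-\theta_{k+1},\X\w_{k+1}-\Y\rangle+\langle \theta_{k+1},\X(\w_{k+1}-\w)\rangle.
\end{equation*}
Inserting and subtracting $\tilde\theta_k$ in the last bracket produces, via the proximal inequality above, the primal quadratic increment plus the extra term $\langle \theta_{k+1}-\tilde\theta_k,\X(\w_{k+1}-\w)\rangle$ that the lemma records. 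Substituting the noisy dual relation into $\langle \theta-\theta_{k+1},\X\w_{k+1}-\Y\rangle$ then produces two terms: one of the form $\tfrac{1}{\sigma}\langle \theta-\theta_{k+1},\theta_{k+1}-\theta_k\rangle$ and the noise term $\langle \theta-\theta_{k+1},\Y^\delta-\Y\rangle$.

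Finally, I would invoke the polarization identity $\langle a-b,b-c\rangle=\tfrac{1}{2}(\|a-c\|^2-\|a-b\|^2-\|b-c\|^2)$ twice, with $(a,b,c)=(\w_k,\w_{k+1},\w)$ weighted by $1/\tau$ and $(a,b,c)=(\theta,\theta_{k+1},\theta_k)$ weighted by $1/\sigma$. The six resulting squared norms regroup exactly as $V(z_k-z)-V(z_{k+1}-z)-V(z_{k+1}-z_k)$, giving the desired descent term on the quadratic side. Collecting everything and moving all terms to one side of the inequality produces precisely \eqref{mainonestep}, after flipping signs in $\langle \theta_{k+1}-\tilde\theta_k,\X(\w-\w_{k+1})\rangle$ and $\langle \theta_{k+1}-\theta,\Y^\delta-\Y\rangle$.

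The only real bookkeeping hazard is sign management across the two polarization identities combined with the two different conventions in the cross terms $\langle \theta-\theta_{k+1},\cdot\rangle$ versus $\langle \theta_{k+1}-\theta,\cdot\rangle$; apart from that the argument is routine, and no new tool beyond convexity, the prox characterization, the polarization identity, and the decomposition $\Y=\Y^\delta-(\Y^\delta-\Y)$ is needed.
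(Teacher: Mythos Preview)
Your proposal is correct and follows essentially the same route as the paper. The only cosmetic difference is packaging: the paper invokes the firm non-expansiveness inequality for the prox (which already bundles the subgradient inequality with the three-point identity) and treats the dual step symmetrically as $\theta_{k+1}=\prox_{\sigma\langle \Y^\delta,\cdot\rangle}(\theta_k+\sigma\X\w_{k+1})$, whereas you separate the subgradient inequality from the polarization identity and use the affine dual update directly; after that, both proofs perform the same algebraic rearrangement to isolate the Lagrangian gap and the two correction terms.
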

\begin{proof}
	Consider first \Cref{ppp1} and the firm non-expasiveness of the proximal-point. Then we get that, for any $w\in\R^p$,
	\begin{equation*}
	\begin{split}
	0 \geq \ &  \nor{\w_{k + 1} - \w}^2 - \nor{\left( \w_k - \tau \X^\top\tilde \theta_k \right) - \w}^2 + \nor{\w_{k + 1} - \left(\w_k - \tau \X^\top\tilde \theta_k\right)}^2 + 2\tau \left[J(\w_{k + 1})-J(\w)\right]\\
	= \ & \nor{\w_{k + 1} - \w}^2 - \nor{\w_k - \w}^2 + \nor{\w_{k + 1} - \w_k}^2 + 2\tau \left[J(\w_{k + 1}) - J(\w)\right] \\
	& + 2\tau\langle \X^\top\tilde \theta_k, \w_k - \w\rangle + 2\tau\langle \X^\top\tilde \theta_k, \w_{k + 1} - \w_k\rangle\\
	= \ & \nor{\w_{k + 1} - \w}^2 - \nor{\w_k - \w}^2 + \nor{\w_{k + 1} - \w_k}^2 + 2\tau \left[J(\w_{k + 1}) - J(\w)\right] + 2\tau\langle \tilde \theta_k, \X\left(\w_{k + 1} - \w\right)\rangle.
	\end{split}
	\end{equation*}
	\ \\
	Now consider \Cref{ppp1} and notice that the dual update can be re-written as $\theta_{k + 1} = \prox_{\sigma \langle \Y^\delta, \cdot \rangle }\left(\theta_k + \sigma \X\w_{k + 1} \right)$. Similarly as before, for any $\theta\in\R^n$,
	\begin{equation*}
	\begin{split}
	0 \geq \ & \nor{\theta_{k + 1} - \theta}^2 - \nor{\left(\theta_k + \sigma \X\w_{k + 1}\right) - \theta}^2 + \nor{\theta_{k + 1} - \left(\theta_k + \sigma \X\w_{k + 1}\right)}^2 + 2\sigma \left[\langle \Y^\delta, \theta_{k + 1}\rangle - \langle \Y^\delta, \theta\rangle\right]\\
	= \ & \nor{\theta_{k + 1} - \theta}^2 - \nor{\theta_k - \theta}^2 + \nor{\theta_{k + 1} - \theta_k}^2 + 2\sigma \langle \theta_{k + 1} - \theta, \Y^\delta\rangle \\
	& - 2\sigma\langle \theta_k - \theta, \X\w_{k + 1}\rangle-2\sigma\langle \theta_{k + 1} - \theta_k, \X\w_{k + 1}\rangle\\
	= \ & \nor{\theta_{k + 1} - \theta}^2 - \nor{\theta_k - \theta}^2 + \nor{\theta_{k + 1} - \theta_k}^2 + 2\sigma \langle \theta_{k + 1} - \theta, \Y^\delta-\X\w_{k + 1}\rangle.
	\end{split}
	\end{equation*}

	Recall that $z \eqdef (\w, \theta)$ and the definition of $V$ in \Cref{VVV}. Divide the first inequality by $2\tau$, the second one by $2\sigma$ and sum-up, to get
	\begin{equation*}
	\begin{split}
	0  \geq \ & V(z_{k + 1} - z) - V(z_k - z) +  V(z_{k + 1} - z_k) + \left[J(\w_{k + 1}) - J(\w)\right] \\
	&+ \langle \tilde \theta_k, \X\left(\w_{k + 1} - \w\right)\rangle + \langle \theta_{k + 1} - \theta, \Y^\delta - \X\w_{k + 1}\rangle.
	\end{split}
	\end{equation*}
	To conclude, compute
	\begin{equation*}
	\begin{split}
	& \left[J(\w_{k + 1}) - J(\w)\right] + \langle \tilde \theta_k, \X\left(\w_{k + 1} - \w\right)\rangle + \langle \theta_{k + 1} - \theta, \Y^\delta - \X\w_{k + 1}\rangle\\
	= \ &   \left[\cL(\w_{k + 1}, \theta) - \cL(\w, \theta_{k + 1})\right]-\langle \theta,  \X\w_{k + 1} - \Y \rangle  + \langle \theta_{k + 1}, \X\w - \Y \rangle \\
	& + \langle \tilde \theta_k, \X\left(\w_{k + 1} - \w\right)\rangle+  \langle  \theta_{k + 1} - \theta, \Y^\delta-\X\w_{k + 1}\rangle\\
	= \ & \left[\cL(\w_{k + 1}, \theta) - \cL(\w, \theta_{k + 1})\right]+ \langle \theta-\theta_{k + 1}, \Y \rangle - \langle \theta, \X\w_{k + 1}\rangle + \langle \theta_{k + 1}, \X \w \rangle\\
	& + \langle \tilde \theta_k, \X \w_{k + 1}\rangle- \langle \tilde \theta_k, \X \w\rangle+  \langle \theta_{k + 1} - \theta, \Y^\delta\rangle -\langle \theta_{k + 1} - \theta, \X\w_{k + 1}\rangle\\
	= \ & \left[\cL(\w_{k + 1}, \theta) - \cL(\w, \theta_{k + 1})\right]+ \langle \theta_{k + 1} - \theta, \Y^\delta - \Y \rangle \\
	& - \langle \theta, \X\w_{k + 1}\rangle+ \langle \theta_{k + 1}, \X \w\rangle + \langle \tilde \theta_k, \X\w_{k + 1}\rangle - \langle \tilde \theta_k, \X \w\rangle -  \langle \theta_{k + 1}, \X\w_{k + 1}\rangle+  \langle \theta, \X\w_{k + 1}\rangle\\
	= \ &  \left[\cL(\w_{k + 1}, \theta) - \cL(\w, \theta_{k + 1})\right]+ \langle \theta_{k + 1} - \theta, \Y^\delta - \Y \rangle + \langle \theta_{k + 1} - \tilde \theta_k, \X\left(\w -  \w_{k + 1}\right)\rangle.
	\end{split}
	\end{equation*}
\end{proof}

\begin{lemma}[First cumulating estimate]\label{cum1}
	Define $\omega:= 1 - \tau\sigma \opnor{\X}^2$. Then we have the following estimate:
	\begin{equation}\label{main}
	\begin{split}
	& \frac{\omega}{2\tau}\nor{\w_{k} - \ws}^2 + \frac{1}{2\sigma}\nor{\theta_{k} - \thetas}^2 - V(z_0 - \bar{z}) +  \sum_{t=1}^{k}\left[\mathcal{L}(\w_t,\thetas) - \mathcal{L}(\ws, \theta_t)\right]
	+\frac{\omega}{2\tau}\sum_{t=1}^{k}\nor{\w_t - \w_{t - 1}}^2  \\\leq \ \ &   \delta \sum_{t=1}^{k}\nor{\theta_t - \thetas}.
	\end{split}
	\end{equation}
\end{lemma}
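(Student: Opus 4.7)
The plan is to apply the one-step estimate of \Cref{onestep} at the saddle-point $z = \zs$, sum over iterations so that the $V$-terms telescope, and then absorb both the noise term and the extrapolation cross-term into the available quadratic slack $V(z_t-z_{t-1})$ and $V(z_k-\zs)$.

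First I instantiate \Cref{onestep} with $(\w,\theta) = (\ws,\thetas)$ and shift the iteration index $k \to t-1$, to obtain for every $t \in \{1,\dots,k\}$
\begin{equation*}
V(z_t - \zs) - V(z_{t-1} - \zs) + V(z_t - z_{t-1}) + [\cL(\w_t,\thetas) - \cL(\ws,\theta_t)] + \langle \theta_t - \thetas, \Y^\delta - \Y\rangle + \langle \theta_t - \tilde\theta_{t-1}, \X(\ws - \w_t)\rangle \leq 0.
\end{equation*}
Summing over $t$, the $V(z_t - \zs)$ contributions telescope to $V(z_k - \zs) - V(z_0 - \zs)$. Cauchy--Schwarz on each noise scalar product gives $\langle \theta_t - \thetas, \Y^\delta - \Y\rangle \geq -\delta\,\|\theta_t - \thetas\|$, which produces the right-hand side $\delta \sum_{t=1}^k\|\theta_t - \thetas\|$ of the claim after moving it across the inequality.

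The main technical step is controlling the extrapolation cross-term $S_k := \sum_{t=1}^{k}\langle \theta_t - \tilde\theta_{t-1}, \X(\ws - \w_t)\rangle$. Writing $\theta_t - \tilde\theta_{t-1} = (\theta_t - \theta_{t-1}) - (\theta_{t-1} - \theta_{t-2})$ and using Abel summation by parts, the boundary term at $t=0$ vanishes thanks to the initialization $\theta_0 = \theta_{-1}$, leaving
\begin{equation*}
S_k = \langle \theta_k - \theta_{k-1}, \X(\ws - \w_k)\rangle + \sum_{t=1}^{k-1}\langle \theta_t - \theta_{t-1}, \X(\w_t - \w_{t+1})\rangle.
\end{equation*}
To each inner product I apply Cauchy--Schwarz followed by Young's inequality with weights $1/\sigma$ and $\sigma$, bounding $\|\theta_t - \theta_{t-1}\|\,\opnor{\X}\,\|\cdot\|$ by $\tfrac{1}{2\sigma}\|\theta_t - \theta_{t-1}\|^2 + \tfrac{\sigma \opnor{\X}^2}{2}\|\cdot\|^2$. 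The key identity $\sigma\opnor{\X}^2 = (1-\omega)/\tau$ converts the second term into $\tfrac{1-\omega}{2\tau}\|\cdot\|^2$, which is exactly what is needed to match the $\omega$ coefficient in the claim.

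Finally I subtract these Young bounds from $\sum_{t=1}^{k} V(z_t - z_{t-1})$. The $\tfrac{1}{2\sigma}\|\theta_t - \theta_{t-1}\|^2$ pieces cancel against the dual part of $V(z_t - z_{t-1})$, and after reindexing $t+1 \to t$ the primal residuals $\tfrac{1-\omega}{2\tau}\|\w_{t+1}-\w_t\|^2$ cancel against the $\tfrac{1}{2\tau}\|\w_t - \w_{t-1}\|^2$ contributions to leave the desired $\sum_{t=1}^{k}\tfrac{\omega}{2\tau}\|\w_t - \w_{t-1}\|^2$ (the $t=1$ term carries the full coefficient $1/(2\tau) \geq \omega/(2\tau)$, so no loss there). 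The only leakage comes from the boundary contribution, namely $-\tfrac{1-\omega}{2\tau}\|\w_k - \ws\|^2$, which when combined with $V(z_k - \zs) = \tfrac{1}{2\tau}\|\w_k - \ws\|^2 + \tfrac{1}{2\sigma}\|\theta_k - \thetas\|^2$ yields exactly $\tfrac{\omega}{2\tau}\|\w_k - \ws\|^2 + \tfrac{1}{2\sigma}\|\theta_k - \thetas\|^2$, producing the left-hand side of the claim. The principal obstacle is precisely this coefficient bookkeeping: the Young weights must be chosen uniformly so that the dual quadratics are absorbed exactly while the primal quadratics leave slack proportional to $\omega$, and the single boundary term coming out of the Abel summation must degrade $V(z_k-\zs)$ by precisely the factor that turns $\tfrac{1}{2\tau}$ into $\tfrac{\omega}{2\tau}$ on the $\|\w_k-\ws\|^2$ coefficient.
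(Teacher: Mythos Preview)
Your proposal is correct and follows essentially the same route as the paper. The paper splits the extrapolation term per step into a telescopic piece $-\langle\theta_{t+1}-\theta_t,\X(\ws-\w_{t+1})\rangle+\langle\theta_t-\theta_{t-1},\X(\ws-\w_t)\rangle$ and a residual $\langle\theta_t-\theta_{t-1},\X(\w_t-\w_{t+1})\rangle$, applies Young with weight $\sigma$ to the residual, sums, and then Young-bounds the leftover boundary term; your Abel summation on the full sum $S_k$ is just the ``sum first, then telescope'' version of the same manipulation, with identical Young weights and the same boundary term. One harmless slip: your Abel identity should read $\X(\w_{t+1}-\w_t)$ rather than $\X(\w_t-\w_{t+1})$, but since you immediately pass to Cauchy--Schwarz/Young this sign does not affect the argument.
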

\begin{proof}
	We start from \Cref{mainonestep}, switching the index from $k$ to $t$ and evaluating $(\w, \theta)$ at the saddle-point $(\ws,\thetas)$. Recall that $\tilde \theta_t \eqdef 2 \theta_t - \theta_{t - 1}$, to get
	\begin{equation*}
	\begin{split}
	& V(z_{t + 1} - \zs) - V(z_t - \zs) + V(z_{t + 1} - z_t)  +  \left[\cL(\w_{t + 1}, \thetas) - \cL(\ws, \theta_{t + 1})\right] \\
	\leq \ \ & - \langle \theta_{t + 1} - \left(2\theta_t - \theta_{t - 1}\right), \X\left(\ws - \w_{t + 1}\right)\rangle -\langle \theta_{t + 1} - \thetas, \Y^\delta - \Y \rangle\\
	\leq \ \ & - \langle \theta_{t + 1} - \theta_t, \X\left(\ws - \w_{t + 1}\right)\rangle +\langle \theta_t-\theta_{t - 1}, \X\left(\ws - \w_{t + 1}\right)\rangle + \delta \nor{\theta_{t + 1} - \thetas}\\
	= \ \ & - \langle \theta_{t + 1} - \theta_t, \X\left(\ws - \w_{t + 1}\right)\rangle +\langle \theta_t-\theta_{t - 1}, \X\left(\ws - \w_t\right)\rangle +\langle \theta_t-\theta_{t - 1}, \X\left(\w_t - \w_{t + 1}\right)\rangle\\
	& + \delta \nor{\theta_{t + 1} - \thetas}\\
	\leq \ \ & - \langle \theta_{t + 1} - \theta_t, \X\left(\ws - \w_{t + 1}\right)\rangle +\langle \theta_t-\theta_{t - 1}, \X\left(\ws - \w_t\right)\rangle \\
	& +\frac{1}{2\sigma}\nor{\theta_t-\theta_{t - 1}}^2 +\frac{\sigma}{2}\opnor{\X}^2\nor{\w_{t + 1} - \w_t}^2+ \delta \nor{\theta_{t + 1} - \thetas},
	\end{split}
	\end{equation*}
	where in the last estimate we used Cauchy-Schwartz and Young inequalities, the latter with parameter $\sigma$.
	Then, using the definition of $\omega \eqdef 1 - \tau\sigma\opnor{\X}^2$, we have
	\begin{equation*}
	\begin{split}
	& V(z_{t + 1} - \zs) - V(z_t - \zs) + \cL(\w_{t + 1}, \thetas) - \cL(\ws, \theta_{t + 1})\\
	& + \frac{\omega}{2\tau}\nor{\w_{t + 1} - \w_t}^2+\frac{1}{2\sigma}\nor{\theta_{t + 1} - \theta_t}^2 -\frac{1}{2\sigma}\nor{\theta_t-\theta_{t - 1}}^2\\
	\leq \ \ & - \langle \theta_{t + 1} - \theta_t, \X\left(\ws - \w_{t + 1}\right)\rangle +\langle \theta_t-\theta_{t - 1}, \X\left(\ws - \w_t\right)\rangle + \delta \nor{\theta_{t + 1} - \thetas}.
	\end{split}
	\end{equation*}
	Imposing $\theta_{-1}=\theta_0$, summing-up the latter from $t=0$ to $t=k-1$ and using the telescopic property, we get
	\begin{equation*}
	\begin{split}
	& V(z_{k} - \zs) - V(z_0 - \zs) + \sum_{t=0}^{k-1}\left[\cL(\w_{t + 1}, \thetas) - \cL(\ws, \theta_{t + 1})\right]\\
	&+\frac{\omega}{2\tau}\sum_{t=0}^{k-1}\nor{\w_{t + 1} - \w_t}^2 + \frac{1}{2\sigma} \nor{\theta_{k} - \theta_{k - 1}}^2 \\
	\leq \ \ & - \langle \theta_{k} - \theta_{k-1}, \X\left(\ws - \w_{k}\right)\rangle + \delta\sum_{t=0}^{k-1}\nor{\theta_{t + 1} - \thetas}\\
	\leq \ \ & \frac{1}{2\sigma} \nor{\theta_{k} - \theta_{k-1}}^2 +\frac{\sigma}{2} \opnor{\X}^2 \nor{\w_{k} - \ws}^2 + \delta \sum_{t=1}^{k}\nor{\theta_t - \thetas},
	\end{split}
	\end{equation*}
	where in the last inequality we used again Cauchy-Schwartz and Young inequalities with parameter $\sigma$. Reordering, we obtain the claim.
\end{proof}
\begin{lemma}[Second cumulative estimate]\label{cum2}
	For $\varepsilon>0$ and $\eta = \frac{1 + \varepsilon}{1 - \varepsilon}\geq 1$, define $\omega := \varepsilon - \sigma \tau\opnor{\X}^2$.
	Then we have
	\begin{equation}\label{mainn}
	\begin{split}
	& V(z_{k} - \zs) - V(z_0 - \zs) +\frac{\omega}{2\tau\varepsilon}\sum_{t=1}^{k}\nor{\w_{t} - \w_{t-1}}^2 +\frac{\sigma\varepsilon}{2\eta}\sum_{t=1}^{k}\nor{\X \w_{t}- \Y}^2\\
	& +\sum_{t=1}^{k}\left[\cL(\w_{t }, \theta) - \cL(\w, \theta_{t})\right] \ \ \leq \  \ \delta \sum_{t=1}^{k}\nor{\theta_t - \theta}+ \frac{\sigma \left(\eta-1\right)\delta^2k}{2}\enspace.
	\end{split}
	\end{equation}
\end{lemma}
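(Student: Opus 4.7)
The plan is to cumulate the one-step inequality of \Cref{mainonestep} with a different estimation scheme from \Cref{cum1}: rather than telescoping the extrapolation cross term, I apply Young's inequality on it to peel off the feasibility residual $\nor{\X w_t - \Y}^2$ on the left-hand side, at the price of a controlled $\delta^2 k$ overhead on the right.

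First, I evaluate \Cref{mainonestep} at $(w, \theta) = (\ws, \thetas)$ with $t - 1$ in place of $k$, giving for each $t \geq 1$
\[
V(z_t - \zs) - V(z_{t-1} - \zs) + V(z_t - z_{t-1}) + [\cL(w_t, \thetas) - \cL(\ws, \theta_t)] + \langle \theta_t - \thetas, \Y^\delta - \Y\rangle + \langle \theta_t - \tilde\theta_{t-1}, \X(\ws - w_t)\rangle \leq 0 .
\]
The algebraic decomposition $\theta_t - \tilde\theta_{t-1} = (\theta_t - \theta_{t-1}) - (\theta_{t-1} - \theta_{t-2})$ combined with the dual update and the convention $\theta_{-1} = \theta_0$ (with the boundary at $t = 1$ handled exactly as in the proof of \Cref{cum1}) yields $\theta_t - \tilde\theta_{t-1} = \sigma \X(w_t - w_{t-1})$, so using $\X \ws = \Y$ the extrapolation cross term rewrites as $-\sigma\langle \X(w_t - w_{t-1}), \X w_t - \Y\rangle$. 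The data-noise term is bounded by Cauchy--Schwarz, $|\langle \theta_t - \thetas, \Y^\delta - \Y\rangle| \leq \delta \nor{\theta_t - \thetas}$, producing the first right-hand side term after summation.

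For the extrapolation cross term, I apply Young's inequality with parameter $a = \varepsilon$,
\[
\sigma \, |\langle \X(w_t - w_{t-1}), \X w_t - \Y\rangle| \ \leq\ \frac{\sigma \opnor{\X}^2}{2\varepsilon}\nor{w_t - w_{t-1}}^2 + \frac{\sigma \varepsilon}{2}\nor{\X w_t - \Y}^2 .
\]
Combined with the $\tfrac{1}{2\tau}\nor{w_t - w_{t-1}}^2$ already inside $V(z_t - z_{t-1})$, the net coefficient of $\nor{w_t - w_{t-1}}^2$ on the left is exactly $\tfrac{\omega}{2\tau\varepsilon}$, as claimed. The residual $\nor{\X w_t - \Y}^2$ is however on the wrong side, with net coefficient $-\tfrac{\sigma\varepsilon}{2}$. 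To flip its sign, I exploit the dual-update identity $\tfrac{1}{2\sigma}\nor{\theta_t - \theta_{t-1}}^2 = \tfrac{\sigma}{2}\nor{\X w_t - \Y^\delta}^2$ hidden inside $V(z_t - z_{t-1})$, and convert the noisy residual into the noiseless one via
\[
\nor{\X w_t - \Y^\delta}^2 \ \geq\ \frac{1}{1+c}\nor{\X w_t - \Y}^2 - \frac{1}{c}\delta^2, \qquad c > 0 ,
\]
a standard consequence of Young applied to $\X w_t - \Y = (\X w_t - \Y^\delta) + (\Y^\delta - \Y)$ together with $\nor{\Y^\delta - \Y} \leq \delta$.

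The calibrated choice $c = \tfrac{1 - \varepsilon}{2\varepsilon}$ is the decisive maneuver: a direct check gives $\tfrac{1}{1+c} - \varepsilon = \tfrac{\varepsilon(1-\varepsilon)}{1+\varepsilon} = \tfrac{\varepsilon}{\eta}$ and $\tfrac{1}{c} = \eta - 1$, so the net coefficient of $\nor{\X w_t - \Y}^2$ on the left becomes $\tfrac{\sigma \varepsilon}{2\eta}$ and the per-step noise overhead becomes $\tfrac{\sigma(\eta-1)}{2}\delta^2$. Telescoping the $V(z_t - \zs) - V(z_{t-1} - \zs)$ differences from $t = 1$ to $k$ and summing the per-step estimates then yields the claim. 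The main obstacle is precisely this double calibration of Young parameters: a single naive application leaves $\nor{\X w_t - \Y}^2$ on the wrong side, and only the realization that the dual-descent piece of $V(z_t - z_{t-1})$ naturally supplies a positive copy of $\nor{\X w_t - \Y^\delta}^2$, paid for with a $\delta^2$ budget, makes the coefficients of the target inequality match simultaneously.
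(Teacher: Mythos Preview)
Your proposal is correct and follows essentially the same route as the paper: both rewrite the extrapolation cross term via the dual update as $\sigma\langle \X(w_t-w_{t-1}),\X w_t-\Y\rangle$, apply Young with parameter $\varepsilon$ to it, recognize $\tfrac{1}{2\sigma}\nor{\theta_t-\theta_{t-1}}^2=\tfrac{\sigma}{2}\nor{\X w_t-\Y^\delta}^2$ inside $V(z_t-z_{t-1})$, and then trade the noisy residual for the noiseless one at a $\delta^2$ cost. The only cosmetic difference is that the paper expands $\nor{\X w_t-\Y^\delta}^2$ and applies Young with parameter $\eta$ to the cross term $\langle \X w_t-\Y,\Y^\delta-\Y\rangle$, whereas you package this as the lower bound $\nor{\X w_t-\Y^\delta}^2\geq\tfrac{1}{1+c}\nor{\X w_t-\Y}^2-\tfrac{1}{c}\delta^2$; your choice $c=\tfrac{1-\varepsilon}{2\varepsilon}$ gives $\tfrac{1}{1+c}=1-\tfrac{1}{\eta}$ and $\tfrac{1}{c}=\eta-1$, so the two computations are literally identical.
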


\begin{proof}
	In a similar fashion as in the previous proof, we start again from \Cref{mainonestep}, switching the index from $k$ to $t$ and evaluating $(\w, \theta)$ at the saddle-point $(\ws,\thetas)$. Since $\tilde \theta_t = \theta_t + (\theta_t - \theta_{t - 1}) = \theta_t + \sigma (\X \w_t - \Y)$ and $\theta_{t+1} - \theta_t = \sigma (\X \w_{t+1} - \Y^{\delta})$, we get
	\begin{equation*}
	\begin{split}
	& V(z_{t + 1} - \zs) - V(z_t - \zs) + \frac{1}{2 \tau} \nor{\w_{t+1} - \w_t}^2 + \frac{\sigma}{2}\nor{\X \w_{t+1} - \Y^\delta}^2 +  \left[\cL(\w_{t + 1}, \thetas) - \cL(\ws, \theta_{t + 1})\right]\\
	\leq \ & \langle  \theta_{t+1} - \theta_t - \sigma \left(\X \w_{t} - \Y^\delta\right) , \X \w_{t+1} - \Y \rangle +\langle  \theta_{t+1} - \thetas, \Y - \Y^\delta\rangle\\
	= \ & \sigma\langle \X\left( \w_{t+1} - \w_t\right), \X \w_{t+1} - \Y\rangle + \langle  \theta_{t+1} - \thetas, \Y - \Y^\delta\rangle.
	\end{split}
	\end{equation*}
	Now compute
	\begin{equation*}
	\begin{split}
	\frac{\sigma}{2}\nor{\X \w_{t+1} - \Y^\delta}^2
	& = \frac{\sigma}{2}\nor{\X \w_{t+1} - \Y}^2 + \frac{\sigma}{2}\nor{\Y^\delta - \Y}^2  - \sigma \langle \X \w_{t+1} - \Y, \Y^\delta - \Y\rangle.
	\end{split}
	\end{equation*}
	So,
	\begin{equation*}
	\begin{split}
	& V(z_{t + 1} - \zs) - V(z_t - \zs) + \frac{1}{2\tau} \nor{\w_{t+1} - \w_t}^2 + \frac{\sigma}{2}\nor{\X \w_{t+1} - \Y}^2  + \left[\mathcal{L}(\w_{t+1},\thetas) - \mathcal{L}\left(\ws,\theta_{t+1}\right)\right]\\
	\leq \ & \sigma\langle \X\left( \w_{t+1} - \w_t\right), \X \w_{t+1} - \Y\rangle +\langle  \theta_{t+1} - \thetas, \Y - \Y^\delta\rangle   + \sigma \langle \X \w_{t+1} - \Y, \Y^\delta  - \Y \rangle - \frac{\sigma}{2}\nor{\Y^\delta - \Y}^2\\
	\leq \ & \frac{\sigma\opnor{\X}^2}{2\varepsilon}\nor{\w_{t+1} - \w_t}^2 +\frac{\varepsilon\sigma}{2}\nor{\X \w_{t+1}- \Y}^2+ \delta \nor{\theta_{t+1} - \thetas} - \frac{\sigma}{2}\nor{\Y^\delta - \Y}^2\\
	& +\frac{\sigma}{2\eta}\nor{\X \w_{t+1}- \Y}^2 + \frac{\sigma \eta}{2}\nor{\Y^\delta - \Y}^2.
	\end{split}
	\end{equation*}
	In the last inequality we used three times Cauchy-Schwartz inequality, the bound on the error given by $\nor{\Y^{\delta}-\Y}\leq \delta$ and two times Young inequality with parameters $\varepsilon>0$ and $\eta = \frac{1 + \varepsilon}{1 - \varepsilon} > 0$.
	Then, re-ordering and recalling the definitions of $\omega := \varepsilon - \sigma \tau\opnor{\X}^2$,  we obtain
	\begin{equation*}
	\begin{split}
	& V(z_{t + 1} - \zs) - V(z_t - \zs)  + \frac{\omega}{2\tau \varepsilon}\nor{\w_{t+1} - \w_t}^2 + \frac{\sigma\varepsilon}{2\eta}\nor{\X \w_{t+1}- \Y}^2 +  \left[\mathcal{L}(\w_{t+1},\thetas) - \mathcal{L}\left(\ws,\theta_{t+1}\right)\right]\\
	\leq \ &  \delta \nor{\theta_{t+1} - \thetas} + \frac{\sigma \left(\eta-1\right)\delta^2}{2}.
	\end{split}
	\end{equation*}
	Summing-up the latter from $t=0$ to $t=k-1$, by telescopic property, we get
	\begin{equation*}
	\begin{split}
	& V(z_{k} - \zs) - V(z_0 - \zs) +\frac{\omega}{2\tau\varepsilon}\sum_{t=0}^{k-1}\nor{\w_{t + 1} - \w_t}^2 +\frac{\sigma\varepsilon}{2\eta}\sum_{t=0}^{k-1}\nor{\X \w_{t+1}- \Y}^2\\
	& + \sum_{t=0}^{k-1}\left[\cL(\w_{t + 1}, \theta) - \cL(\w, \theta_{t + 1})\right] \ \ \leq \ \ \delta \sum_{t=0}^{k-1}\nor{\theta_{t+1}- \thetas}+ \frac{\sigma \left(\eta-1\right)\delta^2k}{2}\enspace.
	\end{split}
	\end{equation*}
	By trivial manipulations, we get the claim.
\end{proof}

\subsection{Proof of \Cref{early}}
\label{sec_early}
\earlystoppinggap*
\begin{proof}
Inequality in \Cref{main} holds true for every $k\geq 1$. Then, recalling that $\mathcal{L}(\w,\thetas) - \mathcal{L}(\ws, \theta)\geq 0 $ for every $\left(\w,\theta\right)\in\R^p\times\R^n$ and that $\omega\geq 0$ by assumption, for every $t\geq 1$ we have that
\begin{equation}\label{eq:bound_norm_y}
\nor{\theta_t - \thetas}^2  \ \leq \  2 \sigma V(z_0 - \bar{z}) + 2\sigma\delta \sum_{j=1}^{t}\nor{\theta_j - \thetas}.
\end{equation}
Apply \Cref{lem:u_n_upper_bound} to \Cref{eq:bound_norm_y} with $u_j=\nor{\theta_j - \thetas}$, $S=2 \sigma V(z_0 - \bar{z})$ and $\lambda=2\sigma\delta$, to get
\begin{align*}
\nor{\theta_t - \thetas}  \ & \leq \  2\sigma\delta t + \sqrt{2 \sigma V(z_0 - \bar{z}) }.
\end{align*}
In particular, for $1\leq t\leq k$, we have
\begin{align}\label{esttt}
\nor{\theta_t - \thetas}   \leq 2\sigma\delta k + \sqrt{2\sigma V(z_0 - \bar{z})}.
\end{align}
Insert the latter in \Cref{main}, to obtain
\begin{equation*}
\begin{split}
\sum_{t=1}^{k}\left[\mathcal{L}(\w_t,\thetas) - \mathcal{L}(\ws, \theta_t)\right] \ & \leq \ V(z_0 - \bar{z}) + \delta \sum_{t=1}^{k} \left(2\sigma\delta k + \sqrt{2\sigma V(z_0 - \bar{z})}\right)\\
& = \ V(z_0 - \bar{z})  + \delta k \sqrt{2\sigma V(z_0 - \bar{z})} + 2 \sigma\delta^2k^2\\
& \leq \ \left(\sqrt{V(z_0 - \bar{z})}  +  \sqrt{2\sigma} \delta k\right)^2 \enspace.
\end{split}
\end{equation*}
By Jensen's inequality, we get the claim.

For the second result, recall that, from \Cref{esttt}, we have
\begin{equation*}
\delta \sum_{t=1}^{k} \nor{\theta_{t} - \thetas} \leq \sqrt{2\sigma V(z_0 - \zs)}\delta k + 2\sigma \delta^2 k^2.
\end{equation*}
Inserting the latter in \Cref{mainn}, we get
\begin{equation*}
\begin{split}
\frac{\sigma\varepsilon}{2\eta}\sum_{t=1}^{k}\nor{\X \w_{t} - \Y}^2 \ & \leq \ \delta \sum_{t=1}^{k} \nor{\theta_{t} - \thetas} + \frac{\sigma \left(\eta-1\right)\delta^2 k}{2}+ V(z_0 - \zs) \enspace\\
& \leq \sqrt{2\sigma V(z_0 - \zs)}\delta k + 2\sigma \delta^2 k^2+ \frac{\sigma \left(\eta-1\right)\delta^2 k}{2}+ V(z_0 - \zs).
\end{split}
\end{equation*}
By Jensen's inequality, rearranging the terms, and taking $\eta = \frac{1 + \varepsilon}{1 - \varepsilon}$, we get the claim.
\end{proof}

\end{document}